\def\eqref#1{equation~\ref{#1}}
\def\1{\bm{1}}
\def\eps{{\epsilon}}
\DeclareMathAlphabet{\mathsfit}{\encodingdefault}{\sfdefault}{m}{sl}
\SetMathAlphabet{\mathsfit}{bold}{\encodingdefault}{\sfdefault}{bx}{n}
\DeclareMathOperator*{\argmin}{arg\,min}
\theoremstyle{definition}
\newtheorem{definition}{\protect\definitionname}
\theoremstyle{plain}
\newtheorem{thm}{\protect\theoremname}
\newtheorem{proposition}{Proposition}
\theoremstyle{plain}
\newtheorem{lem}{\protect\lemmaname}
\theoremstyle{plain}
\theoremstyle{remark}
  \newenvironment{proof}[1][\proofname]{\par
    \normalfont\topsep6\p@\@plus6\p@\relax
    \trivlist
    \itemindent\parindent
    \item[\hskip\labelsep
          \scshape
      #1]\ignorespaces
  }{%
    \endtrivlist\@endpefalse
  }
  \providecommand{\proofname}{Proof}
\crefname{thm}{Theorem}{Theorems}
\crefname{proposition}{Proposition}{Propositions}
\crefname{lem}{Lemma}{Lemmata}
\crefname{definition}{Definition}{Definitions}
\crefname{cor}{Corollary}{Corollaries}
\crefname{assumption}{Assumption}{Assumptions}
\providecommand{\definitionname}{Definition}
\providecommand{\lemmaname}{Lemma}
\providecommand{\theoremname}{Theorem}
\begin{document}
\global\long\def\bE{\operatorname{\mathbb{E}}}%
\global\long\def\argmin{\operatorname{arg\,min}}%
\global\long\def\bR{\mathbb{R}}%
\global\long\def\t{\top}%
\global\long\def\cD{\mathcal{D}}%
\global\long\def\cX{\mathcal{X}}%
\global\long\def\cY{\mathcal{Y}}%
\global\long\def\cH{\mathcal{H}}%
\global\long\def\Str{S_{\textnormal{train}}}%
\global\long\def\tstr{\tilde{S}_{\textnormal{train}}}%
\global\long\def\cP{\mathcal{P}}%
\global\long\def\unif{\operatorname{Unif}}%
\global\long\def\var{\operatorname{Var}}%
\global\long\def\cG{\mathcal{G}}%
\global\long\def\eps{\epsilon}%
\global\long\def\fR{\mathfrak{R}}%
\global\long\def\ber{\operatorname{Ber}}%
\global\long\def\cF{\mathcal{F}}%
\global\long\def\bP{\mathbb{P}}%
\global\long\def\ep{\hat{\mathbb{P}}_{N}}%
\global\long\def\ag{\operatorname{Agg}}%
\global\long\def\iid{\stackrel{\textnormal{i.i.d.}}{\sim}}%
\global\long\def\logistic{\operatorname{logistic}}%
\global\long\def\parti{\operatorname{part}}%
\global\long\def\tr{\operatorname{tr}}%

\newcommand{\lc}[1]{\textcolor{red}{\bf [LC]: #1}}

\title{Learning from Aggregated Data: \\ Curated Bags versus Random Bags}
\author{Lin Chen\thanks{Google Research. Email: linche@google.com},
Gang Fu\thanks{Google Research. Email: thomasfu@google.com},
Amin Karbasi\thanks{Google Research and Yale University. Email: aminkarbasi@google.com},
and Vahab Mirrokni\thanks{Google Research. Email: mirrokni@google.com}}

\date{}

\maketitle

\begin{abstract}
Protecting user privacy is a major concern for many machine learning systems that are deployed at scale and collect from a diverse set of population. One way to address this concern is by collecting and releasing data labels in an aggregated manner so that the information about a single user is potentially combined with others. In this paper, we explore the possibility of training machine learning models with aggregated data labels, rather than individual labels. Specifically, we consider two natural aggregation procedures suggested by practitioners: curated bags where the data points are grouped based on common features and random bags where the data points are grouped randomly in bag of similar sizes. For the curated bag setting and for a broad range of loss functions, we show that we can perform gradient-based learning without any degradation in performance that may result from aggregating data. Our method is based on the observation that the sum of the gradients of the loss function on individual data examples in a curated bag can be computed from the aggregate label without the need for individual labels. For the random bag setting, we provide a generalization risk bound based on the Rademacher complexity of the hypothesis class and show how empirical risk minimization can be regularized to achieve the smallest risk bound. In fact, in the random bag setting, there is a trade-off between size of the bag and the achievable error rate as our bound indicates. Finally, we conduct a careful empirical study to confirm our theoretical findings. In particular, our results suggest that aggregate learning can be an effective method for preserving user privacy while maintaining model accuracy.

\end{abstract}

\section{Introduction}
The use of machine learning methods to personalize online services has brought clear benefits for both users and providers, but has also raised concerns about privacy~\citep{papernot2016towards,al2019privacy,de2020overview}. A recent  proposal to address such privacy concerns is to use aggregated data, rather than individual data, to train models~\citep{criteo}. For example, the StoreKit Ad Network (SKAdNetwork) API from Apple aims to measure ad performance metrics such as impressions, clicks, and app installations at an aggregated level, allowing ad networks and advertisers to prioritize privacy concerns~\citep{SKAdNetwork}. The Private Aggregation API of Chrome Privacy Sandbox may also collect user-generated data consisting of instance-label pairs and then enhances anonymity by providing apps and services with bags of instances that are labeled in an aggregated manner~\citep{chrome}. In the context of classification, for example, the proportion of each class among the instances in a bag can serve as an aggregate label, which can be then perturbed appropriately to ensure differential privacy. This is illustrated in \cref{fig:aggregate_labels}.

\begin{SCfigure}
    \centering
    \includegraphics[width=0.55\linewidth]{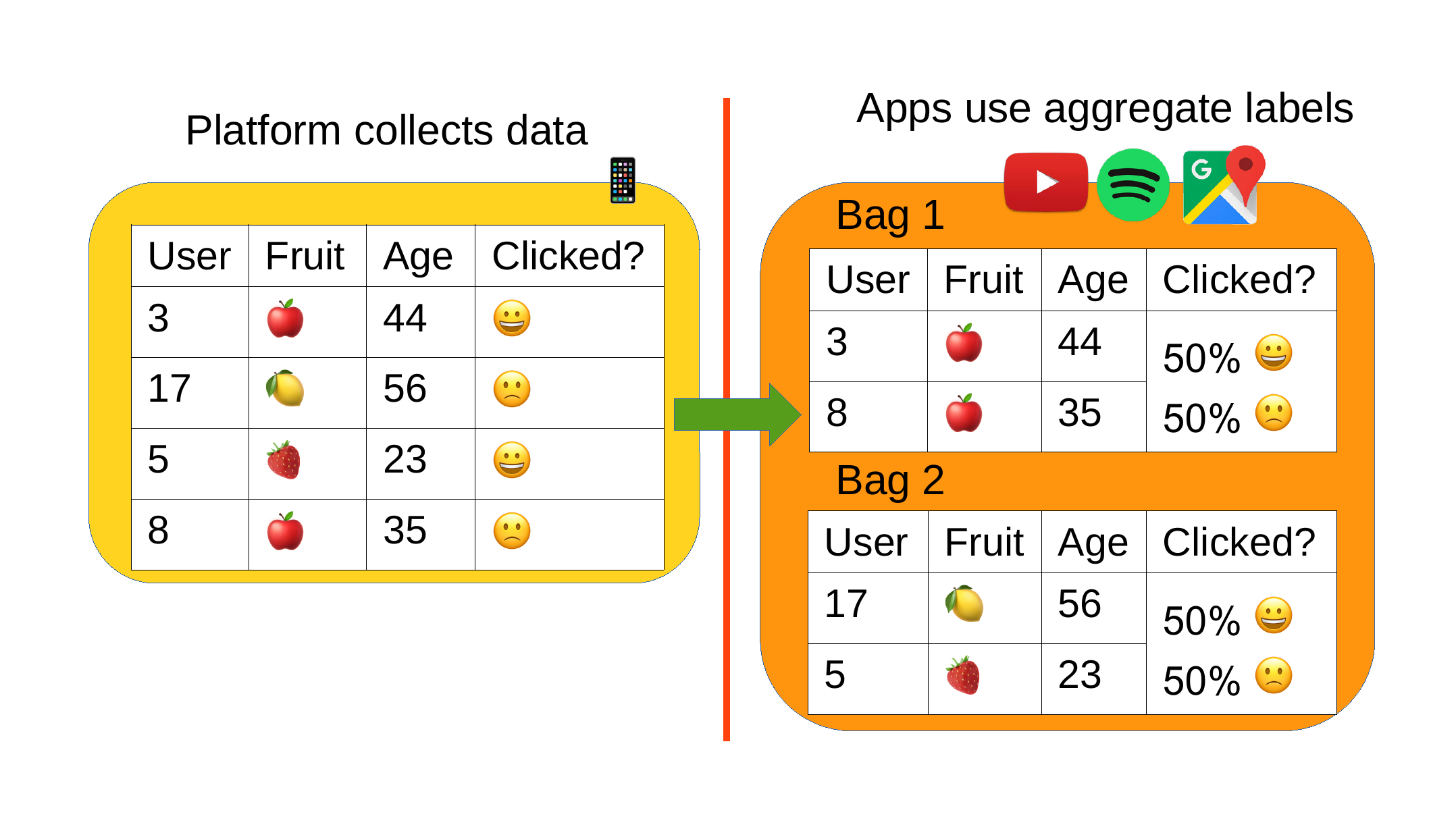}
    \caption{The platform collects user information, such as their \emph{favorite fruit} and \emph{age}, along with a label indicating whether or not they clicked an ad. To protect user privacy, the raw labels are not visible to the apps or services using the data to train machine learning models. Instead, the platform groups the data into bags and provides aggregate labels at the bag level, such as the proportion of users in the bag who clicked the advertisement.
    }
    \label{fig:aggregate_labels}
\end{SCfigure}

In this paper, we explore two recently proposed methods for generating aggregated data labels: \emph{curated bags} and \emph{random bags}. 
 The first method, \emph{curated bags}, was considered in the Criteo Privacy Preserving ML Competition of AdKDD 2021~\citep{diemert2022lessons} and is also implemented in the Chrome Privacy Sandbox. It is primarily designed for datasets with categorical features\footnote{We should highlight that it is not necessary for all feature columns to be categorical.}, but can also be applied to datasets with numerical features by bucketizing them. The process involves selecting a subset of categorical feature columns, aggregating examples that have the same combination of values for those columns into a bag, and labeling each bag with an aggregate label. 
 However, in practice, some bags may be small and could pose a threat to privacy. To address this issue, the small bags can either be filtered out or an appropriate amount of noise can be added to ensure  privacy.
The second method, \emph{random bags}, involves subsampling a predefined number of data points from the training dataset, aggregating the sampled examples into a bag, and labeling the bag with an aggregate label, which is a summary of the individual labels. This makes the individual labels invisible to the apps or services that are using the data. In this paper, we investigate the possibility of learning from aggregated data. Our contributions can be summarized as follows.

\begin{compactitem}

\item For a broad class of loss functions, so called semilinear loss (e.g., mean squared error, log loss, and Poisson loss), we demonstrate that if we use curated bags and if the model is a generalized additive model (whose sub-models are even allowed to share parameters), then we can perform gradient descent-based learning from aggregate labels without any performance loss.

    \item 
    We inverstigate the PAC learnability of random bags by using the Rademacher complexity and propose an estimator that minimizes an empirical bag-level risk. Our generalization bound shows the trade-off between the sample complexity and the size of the bag.

\item We conduct an empirical study of our lossless aggregate learning method, which utilizes the curated bags procedure. We train models on both individual and aggregate labels, and find that the curated bags approach is able to effectively learn from aggregate labels without any loss of performance. We also find that the generalized additive model with neural nets as sub-models outperforms the model with linear feature crosses, and that the curated bags approach outperforms the random bags approach. These results suggest that curated bags are more effective at preserving information during aggregation.
    
\end{compactitem}

We discuss the \textbf{societal impact} of this work in \cref{sec:societal}. All proofs are relegated to the appendix.

\section{Related Work}\label{sec:related}

Aggregate labels are commonly used in group testing methods, such as screening for HIV in donated blood products ~\citep{wein1996pooled} and identifying viral epidemics such as COVID-19~\citep{sunjaya2020pooled}.
There are several prior work on learning with label proportions \citep{shi2018learning,shi2019learning,cui2017inverse,xiao2020new,li2018study,quadrianto2008estimating,patrini2014almost,musicant2007supervised}. While \citet{yu2014learning} studied learning with random bags, they presented a distribution-independent VC dimension bound instead of a distribution-dependent Rademacher complexity bound, which is not only tighter but can also be applied to both classification and regression problems. In addition, our work applies a different random sampling procedure, and it also includes the rigorous study of curated bags. \citet{quadrianto2008estimating} examined how to estimate labels from label proportions using a specific generative model. Similarly, \citet{zhang2020learning} applied the maximum likelihood method and developed theoretical guarantees by introducing the concept of consistency up to an equivalence relation. \citet{musicant2007supervised} presented a framework for learning from aggregate outputs and demonstrated adaptations of several classical machine learning algorithms. Other related work includes the proportion-SVM ($\propto$SVM) method~\citep{yu2013proptosvm}, a boosting method for learning with label proportions~\citep{qi2017adaboost}, and extensions of nonparallel SVM that can learn with label proportions~\citep{qi2016learning,chen2017learning}. Recently, \citet{saket2022combining} studied the problem of combining bag distributions to better learn from label proportions.
Another related area of research is label differential privacy~\citep{chaudhuri2011sample,beimel2013private,wang2019sparse,ghazi2021deep}. In this setting, the labels of individual instances are considered sensitive and require protection, while their features are considered non-sensitive. Learning from aggregate labels can be seen as an approach to achieving label privacy.

\section{Lossless Learning from Aggregate Labels Under Curated Bags}\label{sec:lossless}

We use the shorthand notation $n$ to denote the set $\{1, 2, \dots, n\}$. We denote the data domain by $\cX$, and the label domain by $\cY$.
We begin with the aggregating strategy called \emph{curated bags}, a way of grouping examples in a dataset by their feature value combination. This aggregation method creates a partition of the entire dataset, where all examples in the same bag share the same feature value combination on the selected feature columns, while examples in different bags differ on those. In what follows, we show that  the curated bags aggregation can achieve the same performance as learning from individual labels in  the classical machine learning setting. To do so, we need to introduce two key components:

\begin{compactitem}
\item A \textbf{semilinear loss function} $\ell(y,\hat{y})$ is a loss function that is composed of  a linear  and a nonlinear function of the model prediction $\hat{y}$. Some widely used loss functions, such as the mean squared error, log loss, and Poisson loss, are all special cases of semilinear loss functions.
\item A \textbf{generalized additive model} (GAM) is a statsitical learning model that can be decomposed into a sum of several sub-models, allowing for different  model capacity and expressivity of each sub-model. Each sub-model can be a neural network, decision tree, etc. The sub-models are allowed to share parameters, meaning that their parameter sets do not have to be non-overlapping. This allows GAMs to be more flexible than traditional regression models, which can only model a single relationship between a response variable and a set of predictors.%
     
\end{compactitem}

\subsection{Feature-based Curated Bags}

\begin{figure}[t]
    \centering
    \includegraphics[width=\linewidth]{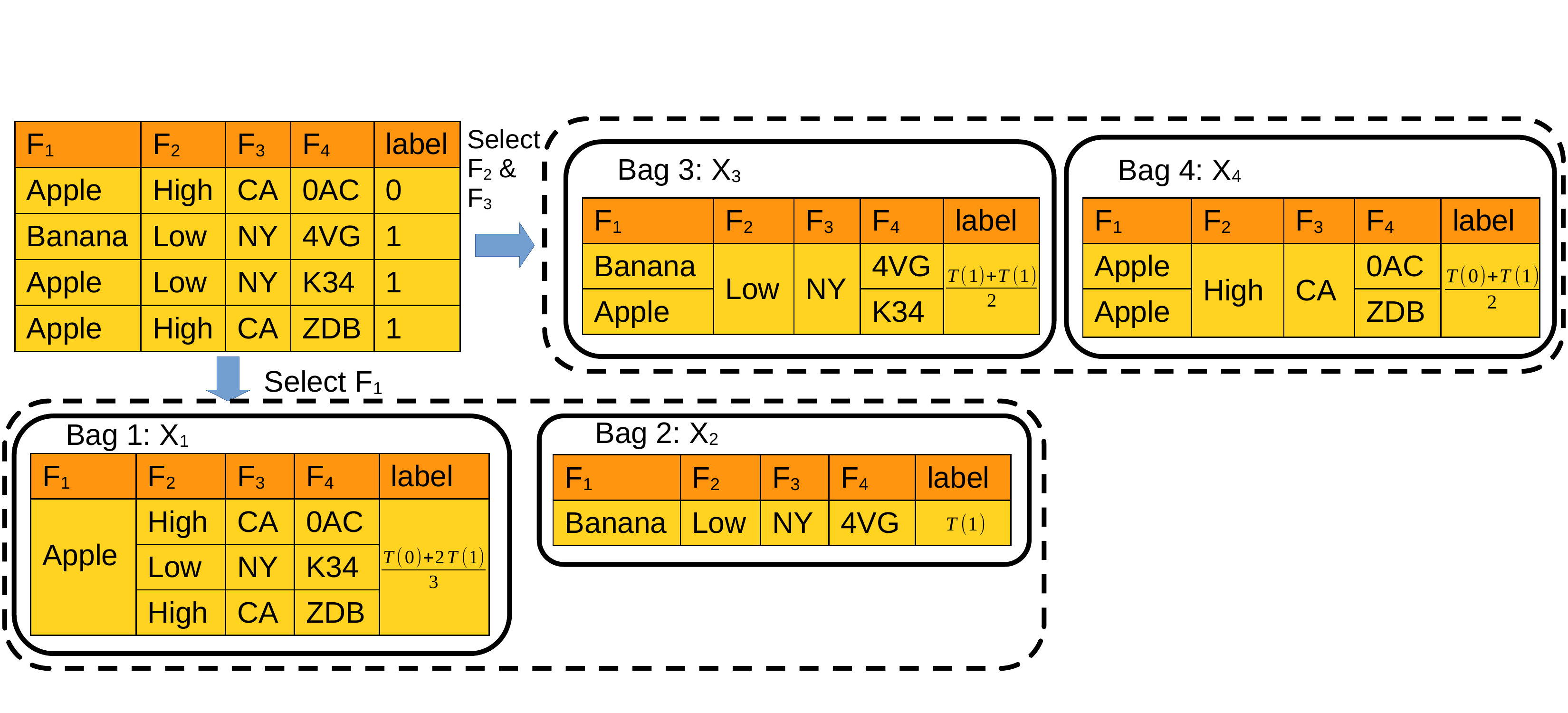}
    \caption{The figure illustrates the construction of curated bags, in which the data is partitioned according to the feature values. The original data, at top left, contains 4 features and 1 binary label. The data is first partitioned into two bags, based on the first feature. The second and third features are then considered, and the data is partitioned into two more bags, based on all possible combinations of the two features. The aggregate label for each bag is the average of the transformed label values.
    \label{fig:curated_bags}}

\end{figure}

In this section, we consider an aggregate label generation procedure termed \emph{feature-based curated bags}, or curated bags for short. This procedure is inspired by the Criteo Privacy Preserving ML Competition of AdKDD 2021~\citep{diemert2022lessons}, and it assumes that all features are categorical. For non-categorical features, e.g., numerical features, one may categorize them and transform them into categorical features and apply the curated bag aggregation procedure.

Specifically, curated bags are formed by partitioning the training dataset according to the feature values of examples. In \cref{fig:curated_bags}, we illustrate an example. The table in the top left is the original dataset, which consists of 4 feature columns $F_1,F_2,F_3,F_4$ and 1 binary label column $y$. We first choose the feature value $F_1$ to partition the dataset into two bags $X_1$ and $X_2$ (the two tables in the dash line box in the left bottom). In $X_1$, the value of $F_1$ for all examples is \emph{apple}. The value of $F_1$ for the data example in $X_2$ is \emph{banana}. The aggregate label is the average of the transformed labels. \textbf{The transform function $T(\cdot)$ will be chosen according to the loss function in \cref{def:aggregatable_loss} and \eqref{eq:semilinear}.} For example, the original labels of examples in $X_1$ are $0,1,1$. Using some transformation function, we get the transformed labels $T(0),T(1),T(1)$. The aggregate label is the average of the transformed labels, which is $\frac{T(0)+2T(1)}{3}$.
We can also select a combination of more than one feature. By selecting the combination of features $F_2$ and $F_3$, we partition the dataset based on the values of those features. There are two possible combinations: (Low, NY) and (High, CA). The second and third data examples in the table have the combination (Low, NY), while the first and last examples have the combination (High, CA). As a result, we group the second and third examples together in $X_3$, and the first and last examples together in $X_4$. The aggregate label for each bag is the average of the $T(\cdot)$ value of the original labels in that bag.

This bagging approach produces more informative aggregate labels than random bags because it partitions the training dataset based on features, which ensures that each bag contains examples that are more likely to be relevant to each other. This in turn makes it more likely that the aggregate label for each bag will be informative and useful for training a supervised learning model.

We now formally define curated bags. Let $\Str$ be the raw training dataset of $N$ examples, defined as
$\Str = \left\{(x^{(i)}, y^{(i)}) \mid i \in [N] \right\}$
where each example $x^{(i)} \in V_1 \times \cdots \times V_d$ is a vector of $d$ features. Each feature $F_i$ is associated with a finite set of possible values $V_i$, which we call the vocabulary of $F_i$. For each example $x\in \Str$, we denote the value of the $i$-th feature by $F_i(x)$.

Given a subset of features $C$, Algorithm~\ref{alg:bag-generation} shows how to generate curated bags and aggregate labels by partitioning the training dataset based on the combination of feature values of features $C$. The examples with the same feature values are grouped into a bag. The aggregate label for each bag is the average of the original labels after applying a transform function $T(\cdot)$.

\newcommand{\CB}{\mathsf{CuratedBags}}
\newcommand{\MCB}{\mathsf{MultiCuratedBags}}

\begin{algorithm}[htb]
\caption{$\CB(C)$: Generate curated bags by partitioning the training dataset $\Str$ by feature set $C$ \label{alg:bag-generation}}
\begin{algorithmic}[1]
\Require Selected features $C = \{c_1,c_2,\dots,c_{|C|}\}\subseteq [p]$.
    \ForAll{ $(v_1,v_2,\dots,v_{|C|})\in V_1\times V_2\times \cdots V_{|C|}$}
    \State  $\Str^{(v_1,v_2,\dots,v_{|C|})} \gets \{(x,y)\in \Str \mid F_i(x) = v_i,\forall i\in C\}$
    \State Generate a bag $X_{(v_1,v_2,\dots,v_{|C|})} \gets \{ x\mid (x,y)\in \Str^{(v_1,v_2,\dots,v_{|C|})} \}$.
    \State Generate the aggregate label $\bar{y}_{(v_1,v_2,\dots,v_{|C|})}\gets \frac{1}{|X_{(v_1,v_2,\dots,v_{|C|})}|} \sum_{(x,y)\in \Str^{(v_1,v_2,\dots,v_{|C|})} } T(y)$.\label{ln:T}
    \EndFor
\State \Return $\left\{\left(X_{(v_1,v_2,\dots,v_{|C|})}, \bar{y}_{(v_1,v_2,\dots,v_{|C|})}\right)\mid (v_1,v_2,\dots,v_{|C|})\in V_1\times V_2\times \cdots V_{|C|} \right\}$
\end{algorithmic}
\end{algorithm}

Usually, we choose more than one subset of features to partition the training dataset and generate curated bags. We denote the set of selected feature sets by $\mathcal{C} = \{C_1, C_2, \dots, C_{|\mathcal{C}|}\}$. For each selected feature set, we use Algorithm~\ref{alg:bag-generation} to generate curated bags and aggregate labels.

\begin{algorithm}[htb]
\caption{$\MCB(\mathcal{C})$: Generate curated bags by partitioning the training dataset $\Str$ by multiple feature sets $\mathcal{C}$ \label{alg:bag-generation2}}
\begin{algorithmic}[1]
\Require $\mathcal{C}=\{C_1, C_2, \dots, C_{|\mathcal{C}|}\}$ where each $C_i\subseteq[p]$
\For{$C\in \mathcal{C}$}
    \State Generate curated bags and aggregate labels using $\CB(C)$ in \cref{alg:bag-generation}
\EndFor
\end{algorithmic}
\end{algorithm}

\subsection{Loss Function and Generalized Additive Model}

\paragraph{Combinable function.}

In the following, we introduce the notion of an combinable function with respect to a model class and a bag (with its aggregate label). The idea of an combinable function is that the sum of this function over a bag of examples and labels can be computed through only the examples and their aggregate label.

\begin{definition}[Combinable function]
Let $X = \{x_j\}_{j \in [m]} \subseteq \mathcal{X}$ be a bag of examples whose individual labels are $\{y_j\}_{j\in [m]}$, and $\bar{y} = \phi(y_1, \dots, y_m)$ be the aggregate label of $X$. A function $g:\cX\times \cY\to \bR$ is combinable with respect to $(X,\bar{y})$ if there exists a function $J$ such that
\begin{equation}
\frac{1}{m}\sum_{j\in [m]} g(x_j,y_j) = J(x_1,\dots,x_m,\bar{y}).
\end{equation}
\end{definition}

Let $X = \{x_j\}_{j \in [m]} \subseteq \mathcal{X}$ be a bag of examples whose individual labels are $\{y_j\}_{j\in [m]}\subseteq \cY$, and let $\bar{y} = \phi(y_1, \dots, y_m)$ be the aggregate label of $X$. Given the loss function $\ell:\cY\times \cY \to \bR$, and the model class $\mathcal{H} = \{f_\beta \colon \mathcal{X} \to \mathbb{R}^K \mid \beta \in \Theta\subseteq \bR^p \}$, when we perform differentiable learning, we need to compute the derivative 

$$\frac{1}{m}\sum_{j\in [m]}\frac{\partial \ell(y_j,f_\beta(x_j))}{\partial \beta_i}.$$

Note that if the function $\frac{\partial \ell(y,f_\beta(x))}{\partial \beta_i}$ turns out to be combinable, then we suffer no loss from aggregate learning because we can recover the derivative using the aggregate label, as in the usual supervised learning with data examples and individual labels. In the following, we introduce a large class of loss functions, called semi-linear losses, for which we can prove that they are combinable (see Theorem~\ref{thm:lossless-general}).

\begin{definition}[Semilinear loss]
\label{def:aggregatable_loss}
A loss function $\ell(y, \hat{y})$ of the label $y\in \bR^K$ and the predicted value $\hat{y}\in \bR^K$ is said to be semilinear if it can be written in the following form:
\begin{equation}\label{eq:semilinear}
\ell(y, \hat{y}) = b(\hat{y}) - T(y)^\top  \hat{y} + c(y),
\end{equation}
for some functions $b(\cdot)\in \bR, T(\cdot)\in \bR^K, c(\cdot)\in \bR$, where $T(\cdot)$ is the transform function of the label.
\end{definition}

The family of semilinear loss functions encapsulates a variety of loss functions in machine learning, which includes mean squared error (for regression), log loss (for classification) and Poisson loss as special cases. 

\def\LSE{\operatorname{LSE}}

\begin{compactitem}
    \item \textbf{Mean squared error.} If we set $b(\hat{y}) = \frac{1}{2} \|\hat{y}\|^2$, $c(y) = \frac{1}{2} \|y\|^2$ and $T(y)= y$, then we have $\ell(y,\hat{y}) = \frac{1}{2} \|\hat{y}\|^2 - y^\top \hat{y} + \frac{1}{2} \|y\|^2 = \frac{1}{2} \|y-\hat{y}\|^2$, which is the mean squared error. 
    \item \textbf{Log loss and cross-entropy loss.}  For log loss and cross entropy loss, we require that the label $y$ be a one-hot vector. If we set $b(\hat{y}) = \LSE(\hat{y}) \triangleq \log(\sum_{i\in [K]} e^{\hat{y}_i})  $ ($\LSE$ is known as the LogSumExp function), $c(y) = 0$, and $T(y) = y$, then we have $
        \ell(y,\hat{y}) = -y \hat{y} + \LSE(\hat{y}_i) = \sum_{i\in [K]} 1_{\{y_i=1\}}\left(-\hat{y}_i + \LSE(\hat{y}_i)\right) = - \sum_{i\in [K]} 1_{\{y_i=1\}} \log\frac{e^{\hat{y}_i}}{\sum_{j\in [K] } e^{\hat{y}_j}}
    $.
    \item \textbf{Poisson loss.} If we set $K=1$, $b(\hat{y})=e^{\hat{y}}$, $c(y)=0$, and $T(y)=y$, then we have $\ell(y,\hat{y}) = e^{\hat{y}} - y\hat{y}$.
\end{compactitem}

The above examples show the generality of semilinear loss functions. 

\subsection{Semilinear Losses and Generalized Additive Models Under Curated Bags}

We demonstrate the aggregability of the derivative with respect to parameters when using a semilinear loss and a generalized additive model under curated bags. We begin with the simplest case and make a few assumptions to better convey the intuition. These assumptions will be relaxed later.

\begin{compactitem}
    \item \textbf{Scalar semilinear loss.} We use the semilinear loss with $K=1$ so $y$ and $\hat{y}$ are scalars. Moreover, assume for now that $T(y)=y$ is the identity function (this already covers the mean squared error, log loss and Poisson loss). So in this case, $\ell(y,y') = b(\hat{y})-y \hat{y}+c(y)$.
    \item \textbf{Single-indexed curated bag.} We use a curated bag $X = \{x_i\}_{i \in [m]}$, where $x_i$ is a data example, obtained by partitioning the training dataset by the feature value of the $j_0$-th feature. Therefore, for all $i \in [m]$, the value of the $j_0$-th feature of $x_i$ is equal.
    \item \textbf{Additive model.} Suppose that $f(x;\beta)$ 
    can be written as a sum of several sub-models, each parameterized by $\beta_j$:
    
$$f(x;\beta) = \sum_{j\in [p]} f_j(x_j;\beta_j),$$

where $\beta = \begin{pmatrix}
     \beta_1, 
     \beta_2,
     \dots ,
     \beta_p
\end{pmatrix}^\top $, $x = \begin{pmatrix}
     x_1  ,
     x_2 ,
     \dots ,
     x_p
\end{pmatrix}^\top $.
\end{compactitem}

We summarize the result of lossless aggregate learning in its simplest version in Proposition \ref{prop:simple-lossless}. This proposition implies that the sum of the derivatives of the loss function with respect to the $j_0$-th feature on all individual data examples in a curated bag can be obtained from the aggregate label, if the curated bag is obtained by partitioning the training dataset according to the feature value of the $j_0$-th feature. If we have multiple curated bags, and they are all obtained by partitioning the training dataset according to the feature value of the $j_0$-th feature, the sum of the derivatives of the loss function with respect to the $j_0$-th feature on all data examples in these curated bags can also be obtained from the aggregate labels of these bags. We simply need to sum the right-hand side of Equation \ref{eq:simple-lossless} for each bag.

\begin{proposition}[Lossless Aggregate Learning, Simplest Version]\label{prop:simple-lossless}
Let $X = \{x_i\}_{i \in [m]}$ be a curated bag of examples, and for all $i \in m$, let the value of the $j_0$-th feature of $x_i$ be equal. Let the label of $x_i$ be $y_i$ and the aggregate label be $\bar{y}=\frac1m\sum_{i\in [m]}y_i$. Let the loss function be $\ell(y,\hat{y}) = b(\hat{y})-y \hat{y}+c(y)$ and the model be $\hat{y}_i=f(x_i;\beta) = \sum_{j\in [p]} f_j(x_{i,j};\beta_j)$, where both $x$ and $\beta$ are $p$-dimensional vectors and $x_{i,j}$ is the $j$-th entry of $x_i$.  Then, we have
\begin{equation}\label{eq:simple-lossless}
    \frac{1}{m}\sum_{i\in [m]} \frac{\partial \ell(y_i,\hat{y}_i)}{\partial \beta_{j_0}} = \frac{\partial f_{j_0}(x_{1,j_0};\beta_{j_0})  }{\partial \beta_{j_0}} \left(\frac{1}{m}\sum_{i\in [m]} b'(\hat{y_i}) - \bar{y} \right) \,.
\end{equation}
\end{proposition}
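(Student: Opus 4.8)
The plan is a direct chain-rule computation, with the curated-bag structure doing the essential work. First I would differentiate the per-example loss through the model. Since $\ell(y,\hat{y}) = b(\hat{y}) - y\hat{y} + c(y)$ and $\hat y_i = f(x_i;\beta)$, the chain rule gives
\[
\frac{\partial \ell(y_i,\hat{y}_i)}{\partial \beta_{j_0}} = \bigl(b'(\hat{y}_i) - y_i\bigr)\,\frac{\partial \hat{y}_i}{\partial \beta_{j_0}},
\]
where I used $\partial_{\hat y}\ell(y,\hat y) = b'(\hat y) - y$ (the term $c(y)$ contributes nothing). Next I would exploit the additive structure $f(x_i;\beta) = \sum_{j\in[p]} f_j(x_{i,j};\beta_j)$: only the $j_0$-th summand depends on $\beta_{j_0}$, so $\partial \hat y_i/\partial\beta_{j_0} = \partial f_{j_0}(x_{i,j_0};\beta_{j_0})/\partial\beta_{j_0}$.

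The key step is then to invoke the single-indexed curated bag hypothesis: all examples in $X$ share the same value of the $j_0$-th feature, i.e. $x_{i,j_0} = x_{1,j_0}$ for every $i\in[m]$. Hence $\partial f_{j_0}(x_{i,j_0};\beta_{j_0})/\partial\beta_{j_0} = \partial f_{j_0}(x_{1,j_0};\beta_{j_0})/\partial\beta_{j_0}$ is a quantity independent of the index $i$, so it factors out of the average:
\[
\frac{1}{m}\sum_{i\in[m]} \frac{\partial \ell(y_i,\hat{y}_i)}{\partial \beta_{j_0}}
= \frac{\partial f_{j_0}(x_{1,j_0};\beta_{j_0})}{\partial \beta_{j_0}}\cdot \frac{1}{m}\sum_{i\in[m]}\bigl(b'(\hat{y}_i) - y_i\bigr).
\]
Finally I would split the inner average into $\frac1m\sum_i b'(\hat y_i)$ and $\frac1m\sum_i y_i$ and recognize the latter as $\bar y$, which yields exactly \eqref{eq:simple-lossless}.

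There is no real analytic obstacle here; the content is conceptual rather than technical. The point worth emphasizing in the write-up is \emph{why} the right-hand side is computable from the bag and its aggregate label alone: the factor $\partial f_{j_0}(x_{1,j_0};\beta_{j_0})/\partial\beta_{j_0}$ and each $b'(\hat y_i) = b'(f(x_i;\beta))$ depend only on the examples $x_1,\dots,x_m$ and the current parameters $\beta$, never on the individual labels, while the only label-dependent piece is $\frac1m\sum_i y_i = \bar y$. Thus the map $J$ of the combinable-function definition exists, and the gradient step incurs no loss. I would also note (as the proposition's surrounding discussion does) that the statement extends additively over multiple bags partitioned on the same feature by summing \eqref{eq:simple-lossless}, since each term is already expressed through bag-local quantities.
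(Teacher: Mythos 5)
Your proposal is correct and follows essentially the same route as the paper's own proof: chain rule to get the factor $b'(\hat{y}_i)-y_i$, the additive structure to isolate the $j_0$-th sub-model, the shared feature value to pull the derivative out of the sum, and identification of $\frac1m\sum_i y_i$ with $\bar y$. The closing remark on why the right-hand side depends on labels only through $\bar y$ matches the paper's surrounding discussion of combinability.
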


\begin{proof}
Since the proof is simple and provides important intuitions, we provide it in the main body of the paper. Let us calculate the derivative with respect to a parameter entry $\beta_{j_0}$:
$\frac{\partial \ell(y_i,f(x_i;\beta))}{\partial \beta_{j_0}} = \left( b'(\hat{y}_i) - y_i \right) \frac{\partial f(x_i;\beta) }{\partial \beta_{j_0}} = \left( b'(\hat{y}_i) - y_i \right)  \frac{\partial f_{j_0}(x_{j_0};\beta_{j_0})  }{\partial \beta_{j_0}}  $,
where $\hat{y}_i = f(x_i,\beta)$. The last equality is because only the sub-model $f_{j_0}(x_{j_0};\beta_{j_0})$ depends on $\beta_{j_0}$.

Summing over $i \in [m]$, we get

\begin{align*}
\frac{1}{m}\sum_{i\in [m]} \frac{\partial \ell(y_i,\hat{y}_i)}{\partial \beta_{j_0}} &= \frac{1}{m} \sum_{i\in [m]}\left( b'(\hat{y_i}) - y_i \right)  \frac{\partial f_{j_0}(x_{i,j_0};\beta_{j_0})  }{\partial \beta_{j_0}} 
= \frac{1}{m}\frac{\partial f_{j_0}(x_{1,j_0};\beta_{j_0})  }{\partial \beta_{j_0}} \sum_{i\in [m]}\left( b'(\hat{y_i}) - y_i \right)  \\
&= \frac{\partial f_{j_0}(x_{1,j_0};\beta_{j_0})  }{\partial \beta_{j_0}} \left(\frac{1}{m}\sum_{i\in [m]} b'(\hat{y_i}) - \bar{y} \right) 
\end{align*}
where $\bar{y} = \frac{1}{m}\sum_{i\in [m]} y_i$ is the aggregate label and the second equality is because the feature value $x_{i,j}$ of $x_i$'s in this bag are all equal, and therefore $\frac{\partial f_{j_0}(x_{i,j_0};\beta_{j_0})  }{\partial \beta_{j_0}}$ are the same for all $i$, and thus all equal to $\frac{\partial f_{j_0}(x_{1,j_0};\beta_{j_0})  }{\partial \beta_{j_0}}$.
\end{proof}

Although \cref{prop:simple-lossless} discusses how to obtain the derivative of the loss function with respect to a specific feature, in practice, we need to know the gradient of the loss function with respect to all parameters in $\beta$ in order to train them. However, if we partition the training dataset according to the $j$th feature value to obtain the curated bags $\CB(j)$ for each $\beta_j$ ($j\in d$), we can compute the gradient using the aggregate labels.

In the following, we will relax our assumptions and extend the result of lossless aggregate learning in Proposition~\ref{prop:simple-lossless} to a more general setting. We relax the assumptions in several aspects. First, we
consider a more general model in which not only  sub-models can share parameters but also may have more than one parameter (in contrast to the assumptions of  \cref{prop:simple-lossless}, which states that each sub-model has only a distinct parameter). More formally, let $\beta\in \bR^p$ be the parameter of the model. The generalized additive model model has the following form:
\begin{equation}\label{eq:gam}
\hat{y} = f(x;\beta) = \sum_{j\in [n_E]} f_j(x_{E'_j};\beta_{E_j})\in \bR^K\,,
\end{equation}
 where $\hat{y}$ is the model prediction of data example $x$, $n_E$ is the number of sub-models, $E_j\subseteq [p]$ is a set of indices of entries of $\beta$ , $E'_j\subseteq [d]$ is a set of indices of entries of $x$, $\beta_{E_j}$ denotes the sub-vector indexed by $E_j$ and $x_{E'_j}$ denotes the sub-vector indexed by $E'_j$. Second,  to compute the derivative with respect to the $j$-th feature, we can use all curated bags $C_i\in \mathcal{C}=\{C_1, C_2, \dots, C_{|\mathcal{C}|}\}$ with aggregate labels $$\CB(C_i) = \left\{\left(X_{(v_1,v_2,\dots,v_{|C_i|})}, \bar{y}_{(v_1,v_2,\dots,v_{|C_i|})}\right)\mid (v_1,v_2,\dots,v_{|C_i|})\in V_1\times V_2\times \cdots V_{|C_i|}\right\},$$ generated by \cref{alg:bag-generation}. Third, instead of simply considering scalar labels in semilinear losses, we now extend the results to the multidimensional setting.

\begin{thm}[Lossless Aggregate Learning]\label{thm:lossless-general}
Let $\ell$ be the semilinear loss function defined in \eqref{eq:semilinear} and we consider the generalized additive model defined in \eqref{eq:gam}. We assume that every parameter entry is used in the model, i.e., $\bigcup_{j \in [n_E]} E_j = [p]$.
Furthermore, for every $j$, there exists $\phi(j) \in [|\mathcal{C}|]$ such that $E'_j \subseteq C_{\phi(j)}$. Let $X$ be a curated bag in $\CB(C_{\phi(j)})$ and define $E'_j(X) \triangleq x_{E'_j}$ for $x \in X$.
\footnote{Recall that the feature value of the feature columns $C_{\phi(j)}$ is identical for every $x\in X$ due to the construction of curated bags. Therefore, since $E'_j$ is a subset of $C_{\phi(j)}$, the expression $x_{E'_j}$ does not depend on which $x$ is chosen from the bag $X$.} We have   \begin{equation}\label{eq:main-thm}
         \sum_{(x,y)\in \Str} \frac{\partial \ell(y,\hat{y})}{\partial \beta_{j_0}} 
    ={} \sum_{j\in [n_E]:j_0\in E_j} 
    \sum_{(X,\bar{y})\in \CB(C_{\phi(j)})}
   \left(  \frac{\partial f(E'_j(X);\beta_{E_j})}{\partial \beta_{j_0}}\right)^\top \sum_{x\in X}
    (\nabla_{\hat{y}} b(\hat{y}) - \bar{y})\,.
\end{equation}
\end{thm}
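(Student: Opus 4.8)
The plan is to reduce the global sum over the whole dataset $\Str$ to a sum over bags, using the linearity of the derivative and the structure of both the semilinear loss and the generalized additive model. First, I would differentiate the semilinear loss: from \eqref{eq:semilinear} we have $\ell(y,\hat y) = b(\hat y) - T(y)^\top \hat y + c(y)$, so by the chain rule $\frac{\partial \ell(y,\hat y)}{\partial \beta_{j_0}} = \bigl(\nabla_{\hat y} b(\hat y) - T(y)\bigr)^\top \frac{\partial \hat y}{\partial \beta_{j_0}}$, where $\hat y = f(x;\beta)$. Next, using the GAM form \eqref{eq:gam}, $\frac{\partial \hat y}{\partial \beta_{j_0}} = \sum_{j\in[n_E]} \frac{\partial f_j(x_{E'_j};\beta_{E_j})}{\partial \beta_{j_0}}$, and since $f_j$ depends on $\beta_{j_0}$ only when $j_0\in E_j$, this sum restricts to $\{j : j_0\in E_j\}$. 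Substituting and swapping the (finite) order of summation gives
$$\sum_{(x,y)\in\Str} \frac{\partial \ell(y,\hat y)}{\partial \beta_{j_0}} = \sum_{j\in[n_E]:\,j_0\in E_j} \sum_{(x,y)\in\Str} \Bigl(\nabla_{\hat y} b(\hat y) - T(y)\Bigr)^\top \frac{\partial f_j(x_{E'_j};\beta_{E_j})}{\partial \beta_{j_0}}.$$

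The crux is then the inner sum over $\Str$ for a fixed sub-model index $j$. I would use the hypothesis that $E'_j \subseteq C_{\phi(j)}$, so that $\CB(C_{\phi(j)})$ partitions $\Str$ into bags $X$ on each of which the feature values on $C_{\phi(j)}$ — hence on $E'_j$ — are constant; write $E'_j(X)$ for that common value $x_{E'_j}$. Consequently $\frac{\partial f_j(x_{E'_j};\beta_{E_j})}{\partial \beta_{j_0}} = \frac{\partial f_j(E'_j(X);\beta_{E_j})}{\partial \beta_{j_0}}$ is the same vector for every $x\in X$ and can be pulled out of the sum over that bag. This gives
$$\sum_{(x,y)\in\Str} \Bigl(\nabla_{\hat y} b(\hat y) - T(y)\Bigr)^\top \frac{\partial f_j(x_{E'_j};\beta_{E_j})}{\partial \beta_{j_0}} = \sum_{(X,\bar y)\in\CB(C_{\phi(j)})} \left(\frac{\partial f_j(E'_j(X);\beta_{E_j})}{\partial \beta_{j_0}}\right)^\top \sum_{x\in X}\bigl(\nabla_{\hat y} b(\hat y) - T(y)\bigr),$$
where the outer sum ranges over the bags (indexed by feature-value tuples) produced by \cref{alg:bag-generation}, and $\bar y$ is the associated aggregate label.

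Finally I would show that the bag-level summand is computable from $(X,\bar y)$ alone, i.e., that the label-dependent part is combinable. Here I use that the aggregate label is, by line~\ref{ln:T} of \cref{alg:bag-generation}, $\bar y = \frac{1}{|X|}\sum_{x\in X} T(y)$ where $y$ is the individual label of $x$; hence $\sum_{x\in X} T(y) = |X|\,\bar y$. Splitting $\sum_{x\in X}(\nabla_{\hat y} b(\hat y) - T(y)) = \sum_{x\in X}\nabla_{\hat y} b(\hat y) - |X|\,\bar y = \sum_{x\in X}(\nabla_{\hat y} b(\hat y) - \bar y)$ — using that $\nabla_{\hat y} b(\hat y)$ depends only on $x$ (through $\hat y = f(x;\beta)$) and the model parameters, not on the hidden label — yields exactly the right-hand side of \eqref{eq:main-thm} after matching $f_j$ with the notation $f(E'_j(X);\beta_{E_j})$ used there. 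The one place to be careful is the bookkeeping: confirming that $\CB(C_{\phi(j)})$ really is a partition of $\Str$ (so that summing over bags recovers the sum over $\Str$ with no double counting and no omission), that $\phi(j)$ is well-defined by the stated assumption, and that the indices on $x_{E'_j}$ versus $\beta_{E_j}$ are kept straight; the analytic content is otherwise just the chain rule plus pulling a constant out of a sum. I expect this indexing/partition bookkeeping, rather than any inequality or limiting argument, to be the main obstacle.
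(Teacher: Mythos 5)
Your proposal is correct and follows essentially the same route as the paper's proof: chain rule on the semilinear loss, restriction to sub-models containing $\beta_{j_0}$, exchange of summation, regrouping the dataset sum into the bag partition $\CB(C_{\phi(j)})$, pulling the constant derivative factor out of each bag, and finally substituting $\sum_{x\in X} T(y) = |X|\,\bar y$ from the aggregate-label definition. No gaps; the bookkeeping concerns you flag are handled exactly as you anticipate.
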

The left-hand side of \cref{eq:main-thm} is the derivative of the loss function with respect to the model parameters, evaluated on all examples in the training dataset with individual labels. The right-hand side of the equation only uses the aggregate labels. This means that we can perform gradient-based learning from aggregate labels.
\cref{thm:lossless-general} assumes that for every $j$, there exists $\phi(j) \in [|\mathcal{C}|]$ such that $E'_j \subseteq C_{\phi(j)}$.  This assumption implies that the curated bags used in the sub-model $f_j(x_{E'_j};\beta_{E_j})$ are obtained by partitioning the training dataset according to the feature value combinations of $E'_j$ (in this case, $E'_j = C_{\phi(j)}$) or even a superset of $E'_j$ (in this case, $E'_j \subseteq C_{\phi(j)}$, and we get a finer partition than $E'_j$).

\section{Learnability Under Random Bags}\label{sec:randomly}

This section investigates a multilabel multiclass classification problem of learning from aggregate data. In this problem, the learner is given data bags that are formed by resampling examples from the training dataset without replacement.
We emphasize that data examples are sampled without replacement within each bag.
We follow the same procedure across bags, i.e.,  replacing all examples and sampling without replacement. This means that same examples may appear in multiple bags.

We assume that the data $(x, y)$ is drawn from an unknown distribution $\mathbb{P}$ over $\mathcal{X} \times \mathcal{Y}$, where $x$ is the example and $y$ is the label. We choose $\mathcal{Y} = \{0, 1\}^K$, which generalizes $K$-class classification as a special case. In our general multilabel multiclass classification setting, the label $y$ can have multiple non-zero entries. We define $h(x)[k]$ and $y[k]$ to be the $k$-th entry of $h(x)$ and $y$, respectively. We evaluate the performance of a model $h : \mathcal{X} \to \mathcal{Y}$ by the expected Hamming distance
$$R(h) = \mathbb{E}_{(x, y) \sim \mathbb{P}} \left[ \sum_{k \in [K]} \mathbb{I}(h(x)[k] \neq y[k]) \right],$$
which is the expected number of entries in which $h(x)$ and $y$ disagree.

We denote the marginal distribution of $x$ (and $y$, respectively) under $\bP$ by $\bP\mid_{x}$ (and $\bP\mid_{y}$, respectively). Let $\cH\subseteq\left\{ f:\cX\to\cY\right\} $ be a hypothesis set. Let $\ep=\frac{1}{N}\sum_{i\in[N]}\delta_{\left(x^{(i)},y^{(i)}\right)}$ be an empirical measure of $\bP$, where $\delta_{\left(x^{(i)},y^{(i)}\right)}$ is the Dirac measure at
$\left(x^{(i)},y^{(i)}\right)\iid\bP$. The empirical measure $\ep$ models the uniform distribution on the training dataset $\Str\triangleq\left\{ \left(x^{(i)},y^{(i)}\right)\mid i\in[N]\right\} $. 

Crucially, and in contrast to the classic learning setting, the raw training data is invisible to the learning algorithm. Instead, the learning algorithm has access to $n$ i.i.d. samples $(X_{i},\bar{y}_{i})$ that are obtained from the following process (denote the distribution of $(X_{i},\bar{y}_{i})$ by $\ag(\ep)$):

\begin{compactitem}
\item For each $i\in[n]$, we resample $m$ example-label pairs $S_{i}\triangleq\left\{ \left(x_{i,j},y_{i,j}\right)\mid j\in[m]\right\} $ from the training dataset $\Str$ uniformly at random without replacement;
\item We set $X_{i}=\left\{ x_{i,j}\mid j\in[m]\right\} $ and $\bar{y}_{i}\mid y_{i,1},y_{i,2},\dots,y_{i,m}\sim\ber\left(\frac{1}{m}\sum_{j\in[m]}y_{i,j}\right)$.
\end{compactitem}

We study the problem of learning a hypothesis $f \in \mathcal{H}$ from the samples $\{(X_i, \bar{y}_i)\}_{i \in [n]}$. Since we consider an agnostic probably approximately correct (PAC) learning setup~\citep[Chapter~2]{mohri2018foundations}, we are interested in upper-bounding the excess risk:
$R(\hat{h}) - \inf_{h \in \mathcal{H}} R(h)$
which is the gap between the risk of the hypothesis that our algorithm selects $R(\hat{h})$ and that of the optimal one in the hypothesis class $\inf_{h \in \mathcal{H}} R(h)$. We will bound the excess risk by the Rademacher complexity, defined as follows.

\begin{definition}[Rademacher complexity~\citep{bartlett2002rademacher,mohri2018foundations}]
The \emph{Rademacher complexity} of $\cH\subseteq\left\{ h:\cX\to\bR\right\} $
is defined by
$\fR_{n,P}\left(\cH\right)\triangleq{} \bE
     \sup_{h\in\cH}\frac{1}{n}\sum_{i\in[n]}\sigma_{i}h(x_{i})$, where $\{x_{i}\}_{i\in [n]}$ are i.i.d. with distribution $P$ and $\{\sigma_{i}\}_{i\in [n]}$
are independent Rademacher random variables. 
\end{definition}

For multilabel classification, a hypothesis outputs a $K$-dimensional vector. We introduce the \emph{flattened Rademacher complexity} to measure the correlation between Rademacher random variables and all entries of the hypothesis's output.
\begin{definition}[Flattened Rademacher complexity]
The \emph{flattened Rademacher complexity} of $\cH\subseteq\left\{ h:\cX\to\bR^{K}\right\} $
is defined by $\fR_{n,P}^{+}\left(\cH\right)\triangleq{}\bE
     \sup_{h\in\cH}\frac{1}{n}\sum_{i\in[n],k\in[K]}\sigma_{i,k}h(x_{i})[k]$,
where $\{x_{i}\}_{i\in [n]}$ are i.i.d.\ with distribution $P$ and $\{\sigma_{i,k}\}_{i\in [n],k\in [K]}$
are independent Rademacher random variables. 
\end{definition}
In \cref{thm:rademacher}, we propose a new estimator $\hat{h}$ and upper-bound its excess risk $R(\hat{h}) - \inf_{h \in \mathcal{H}} R(h)$ by a combination of the usual and flattened Rademacher complexities of the hypothesis class.
\begin{thm}
\label{thm:rademacher} 

If $\mathcal{H}_k \triangleq \{ x \mapsto h(x)[k] \mid h \in \mathcal{H} \}$ and  $\hat{h} \in \cH $ is a minimizer of $ \frac{1}{n}\sum_{i \in [n]} \left\| \frac{1}{m}\sum_{x \in X_i} h(x) - \bar{y}_i \right\|_2^2 - \frac{(m-1)N}{m(N-1)}  \left\| \frac{1}{n}\sum_{i \in [n]} \left( \frac{1}{m}\sum_{x \in X_i} h(x) - \bar{y}_i \right) \right\|_2^2$,
then with probability $1-4\delta$, $R(\hat{h}) - \inf_{h \in \mathcal{H}} R(h)$ is upper bounded by:
    \begin{align}
 &{} \frac{8\left(m-1\right)N}{N-m} 
 \left(\sum_{k\in[K]}\fR_{n,\ep\mid_{x}}(\cH\mid_{k})+K\sqrt{\frac{\log\left(2mK/\delta\right)}{2n}}\right)
 +2\left(4\sqrt{2K}\fR_{N,\bP\mid_{x}}^{+}(\cH)+\sqrt{\frac{\log2/\delta}{2N}}\right)\nonumber \\
 & +\frac{2m\left(N-1\right)}{N-m}\left(4\sqrt{2K}\fR_{n,\ep\mid_{x}}(\cH)+\sqrt{\frac{\log\left(2/\delta\right)}{2n}}\right)\,.\label{eq:excess-risk-bound}
    \end{align}
\end{thm}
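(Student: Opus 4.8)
The plan is to recast the Hamming risk as a squared loss, recognize the estimator's objective as a rescaled and recentered, nearly unbiased estimate of the empirical squared risk on $\Str$, and then run a textbook agnostic-PAC decomposition; the two nonstandard ingredients are a ``bag-to-example'' comparison that collapses bag-averaged Rademacher complexities to ordinary ones at $\ep\mid_x$, and a hands-on treatment of the quadratic correction term. Since $h(x),y\in\{0,1\}^K$, coordinatewise $\mathbb{I}(h(x)[k]\neq y[k])=(h(x)[k]-y[k])^2$, so $R(h)=\bE_{\bP}\|h(x)-y\|_2^2$; write $\hat R_N(h):=\frac1N\sum_{i\in[N]}\|h(x^{(i)})-y^{(i)}\|_2^2$, $Z_i(h):=\frac1m\sum_{x\in X_i}h(x)-\bar y_i\in\bR^K$, $\lambda:=\frac{(m-1)N}{m(N-1)}$, $c:=\frac{m(N-1)}{N-m}$, so that $\hat h$ minimizes $\widehat F(h):=\frac1n\sum_i\|Z_i(h)\|_2^2-\lambda\,\|\frac1n\sum_iZ_i(h)\|_2^2$. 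Setting $G(h):=c\,\widehat F(h)-c\kappa_N$ for a data-dependent constant $\kappa_N$ determined below (so $\hat h$ also minimizes $G$) and choosing $h^\ast$ approaching $\inf_{h\in\cH}R(h)$, decompose
\[
R(\hat h)-\inf_h R(h)\le\bigl[R(\hat h)-\hat R_N(\hat h)\bigr]+\bigl[\hat R_N(\hat h)-G(\hat h)\bigr]+\bigl[G(\hat h)-G(h^\ast)\bigr]+\bigl[G(h^\ast)-\hat R_N(h^\ast)\bigr]+\bigl[\hat R_N(h^\ast)-R(h^\ast)\bigr],
\]
where the middle bracket is $\le0$ by optimality of $\hat h$.

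\emph{The bias identity} (conditionally on $\Str$). Because the $n$ bags are i.i.d.\ and each is an $m$-subsample of $\Str$ drawn without replacement with a Bernoulli-perturbed mean label, $\bE_{\ag}[\frac1n\sum_iZ_i(h)\mid\Str]=\bar a(h):=\frac1N\sum_i(h(x^{(i)})-y^{(i)})$, and a coordinatewise second-moment computation — the without-replacement variance factor is $\frac{N-m}{m(N-1)}$, the label-noise variance is $p(1-p)$ — gives $\bE_{\ag}[\frac1n\sum_i\|Z_i(h)\|_2^2\mid\Str]=\frac1c\hat R_N(h)+\lambda\|\bar a(h)\|_2^2+\kappa_N$ with $\kappa_N$ depending only on the empirical label moments of $\Str$. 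The arithmetic identity $\lambda=1-\frac1c$ is the crux: subtracting $\lambda\,\bE_{\ag}\|\frac1n\sum_iZ_i(h)\|_2^2=\lambda\|\bar a(h)\|_2^2+\frac\lambda n\sum_k\var_{\ag}(Z_1(h)[k])$ cancels the $h$-dependent $\|\bar a(h)\|_2^2$ term exactly, leaving $\bE_{\ag}[G(h)\mid\Str]=\hat R_N(h)-\frac{c\lambda}{n}\sum_k\var_{\ag}(Z_1(h)[k])$; the last term lies in $[0,\tfrac{(m-1)NK}{n(N-m)}]$ and will be absorbed into the $K\sqrt{\log(\cdot)/n}$ tails of the bound.

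\emph{Uniform convergence.} The two population brackets are each bounded by $\sup_{h\in\cH}|R(h)-\hat R_N(h)|$: symmetrization, then Maurer's vector-contraction inequality applied to the $\ell_2$-Lipschitz map $u\mapsto\|u-y^{(i)}\|_2^2$ on $[0,1]^K$ (Lipschitz constant $\le2\sqrt K$), converts the Rademacher average into $4\sqrt{2K}\,\fR^{+}_{N,\bP\mid_x}(\cH)$, and a bounded-differences tail adds $\sqrt{\log(2/\delta)/(2N)}$, giving the middle block of \eqref{eq:excess-risk-bound}. For the two bag brackets, bound each (conditionally on $\Str$, up to the $O(K/n)$ slack) by $c\,\sup_h|\frac1n\sum_i\|Z_i(h)\|_2^2-\bE_{\ag}[\cdot]|+c\lambda\,\sup_h|\,\|\frac1n\sum_iZ_i(h)\|_2^2-\|\bar a(h)\|_2^2\,|$. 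For the first supremum, symmetrize, contract the squared norm by Maurer, and use the key reduction: $\frac1m\sum_{x\in X_i}h(x)[k]=\bE_{\tilde x_i}h(\tilde x_i)[k]$, where $\tilde x_i$ is a uniformly random element of bag $X_i$ and $\tilde x_1,\dots,\tilde x_n$ are i.i.d.\ $\ep\mid_x$, so the bag-averaged flattened Rademacher complexity is $\le\fR^{+}_{n,\ep\mid_x}(\cH)$; with a tail this produces the last block $\frac{2m(N-1)}{N-m}\bigl(4\sqrt{2K}\,\fR_{n,\ep\mid_x}(\cH)+\sqrt{\log(2/\delta)/(2n)}\bigr)$ (note $\frac{2m(N-1)}{N-m}=2c$). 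For the second supremum, expand $\|\frac1n\sum_iZ_i(h)\|_2^2-\|\bar a(h)\|_2^2=2\langle\bar a(h),D(h)\rangle+\|D(h)\|_2^2$ with $D(h):=\frac1n\sum_iZ_i(h)-\bar a(h)$; since $\|\bar a(h)\|_\infty\le1$, Hölder gives $2\langle\bar a(h),D(h)\rangle\le2\sum_k|D_k(h)|$, and $D_k(h)$ (after dropping an $h$-independent piece) is the uniform deviation of the bag-averaged empirical mean of $h[k]$ from its empirical mean, controlled coordinatewise by the representative-sampling reduction plus a tail, contributing $\sum_k\fR_{n,\ep\mid_x}(\cH\mid_k)+K\sqrt{\log(2mK/\delta)/(2n)}$ (the $\log(2mK/\delta)$ from union bounds over the $K$ coordinates and the $O(m)$ distinct bag-average values) with weight $c\lambda$; since $8c\lambda=\frac{8(m-1)N}{N-m}$ and the two brackets contribute a factor $2$, this is the first block, while the quadratic remainder $\|D(h)\|_2^2$ is $O(K/n)$ and absorbed. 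A union bound over the at most four high-probability events yields the $1-4\delta$ guarantee.

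\emph{Main obstacle.} It is the correction term $\lambda\|\frac1n\sum_iZ_i(h)\|_2^2$: being a squared norm of an empirical average rather than an average, it resists symmetrization, and one must (i) verify via $\lambda=1-\frac1c$ that subtracting it exactly kills the $h$-dependent bias $(1-\frac1c)\|\bar a(h)\|_2^2$ in $\bE_{\ag}[\frac1n\sum_i\|Z_i(h)\|_2^2]$ — so that $G$ is genuinely an estimate of $\hat R_N$ up to an $h$-independent constant and an $O(K/n)$ slack — and (ii) control $\sup_h|\,\|\frac1n\sum_iZ_i(h)\|_2^2-\|\bar a(h)\|_2^2\,|$ by expanding about the mean, exploiting the uniform ``radius'' bound $\|\bar a(h)\|_\infty\le1$ (equivalently $\|\bar a(h)\|_2\le\sqrt K$) for the linear term and checking that the quadratic remainder together with the variance slack from step (i) are genuinely lower order. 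Everything else — the representative-sampling reduction, the contractions, the McDiarmid steps, and matching the precise constants $8c\lambda$, $2c$, $4\sqrt{2K}$ and the logarithmic arguments in \eqref{eq:excess-risk-bound} — is routine but lengthy bookkeeping over the without-replacement and Bernoulli variance formulas.
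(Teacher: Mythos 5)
Your proposal is correct and follows essentially the same route as the paper: your identity $\hat R_N(h) - G(h) = c\,[R_1(h)-\hat R_1(h)] - c\lambda\,\Delta_3(h)$ (with $\Delta_3(h) = \|\bar a(h)\|_2^2 - \|\frac1n\sum_i Z_i(h)\|_2^2$) is exactly the decomposition of \cref{lem:R-infR} in disguise, and the three suprema are then controlled by the same ingredients the paper uses --- the without-replacement second-moment formula, Maurer's vector contraction, the bag-to-single-example Rademacher reduction, coordinatewise Hoeffding for the Bernoulli label noise, and a union bound over four events. The one place your version is slightly lossier is the correction term: you expand $\|\rho_2\|_2^2-\|\bar a\|_2^2 = 2\langle\bar a,D\rangle+\|D\|_2^2$ and must separately absorb the quadratic remainder, whereas the paper factors the difference of squares coordinatewise, $|\rho_1(h)[k]^2-\rho_2(h)[k]^2|\le 2|\rho_1(h)[k]-\rho_2(h)[k]|$ (valid since both lie in $[-1,1]$), which eliminates the remainder and produces the constant $8(m-1)N/(N-m)$ cleanly.
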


First, we would like to remark that if every bag only contains a single example (in this case $m=1$ and it reduces to the usual classification with individual labels), the correction term $-\frac{\left(m-1\right)N}{m\left(N-1\right)}\left\Vert \frac{1}{n}\sum_{i\in[n]}\left(\frac{1}{m}\sum_{x\in X_{i}}h(x)-\bar{y}_{i}\right)\right\Vert _{2}^{2}$ becomes zero. Second, there are three terms in \cref{eq:excess-risk-bound}. The first term has a factor of $\frac{2\left(m-1\right)N}{N-m}$ and the third term has a factor of $\frac{2m\left(N-1\right)}{N-m}$. If the bag size approaches the size of the entire training dataset ($m\to N$), both factors go to infinity and thereby drive the first and third terms to infinity, which also agrees with our intuition.

\section{Experiments}\label{sec:experiments}

\begin{figure*}
    \centering
    \begin{subfigure}[t]{0.32\linewidth}
         \centering
         \includegraphics[width=\linewidth]{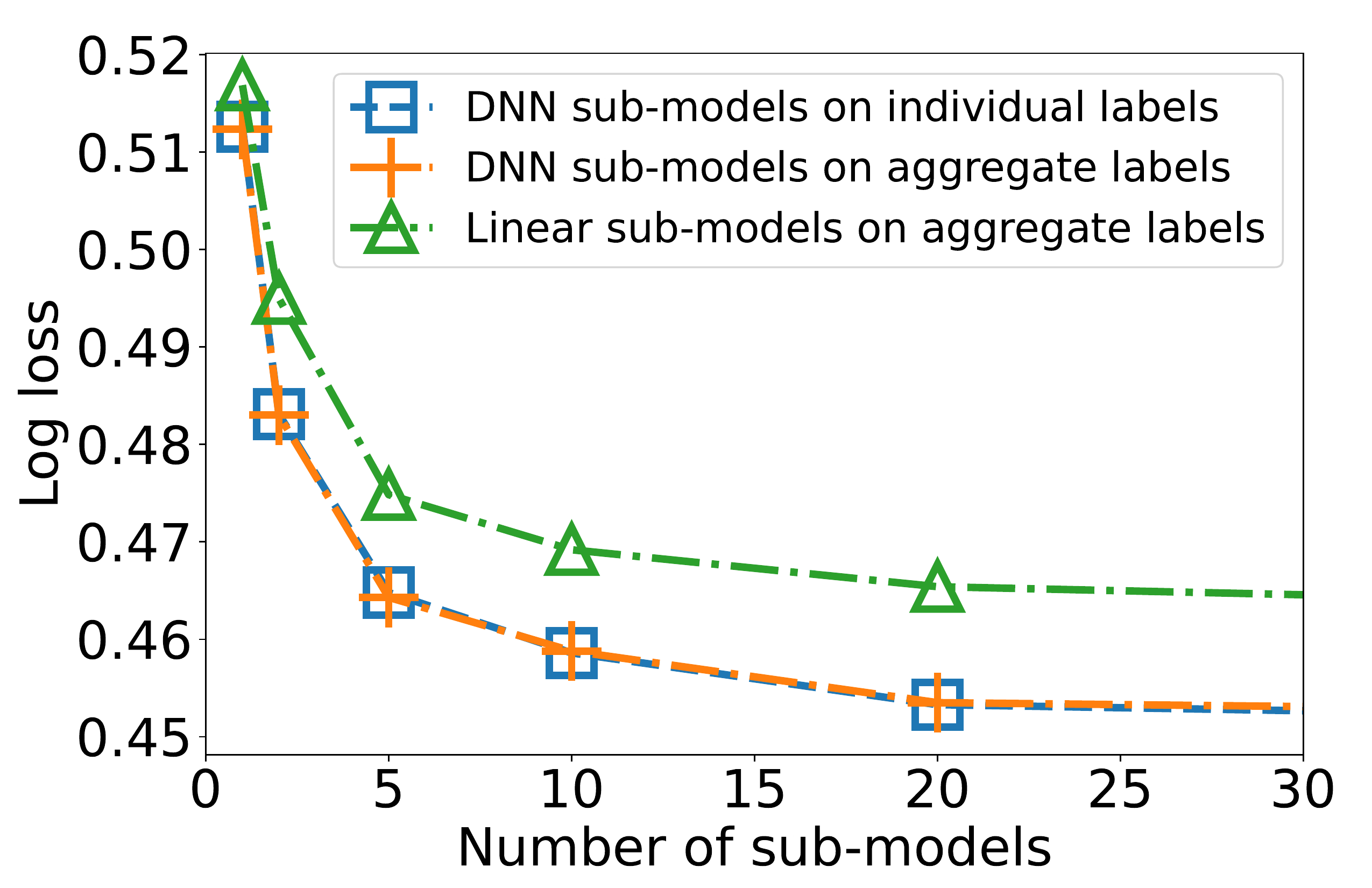}
         \caption{Test error decreases with more sub-models under curated bags without DP noise\label{fig:noise_free}}
     \end{subfigure}~
    \begin{subfigure}[t]{0.32\linewidth}
         \centering
         \includegraphics[width=\linewidth]{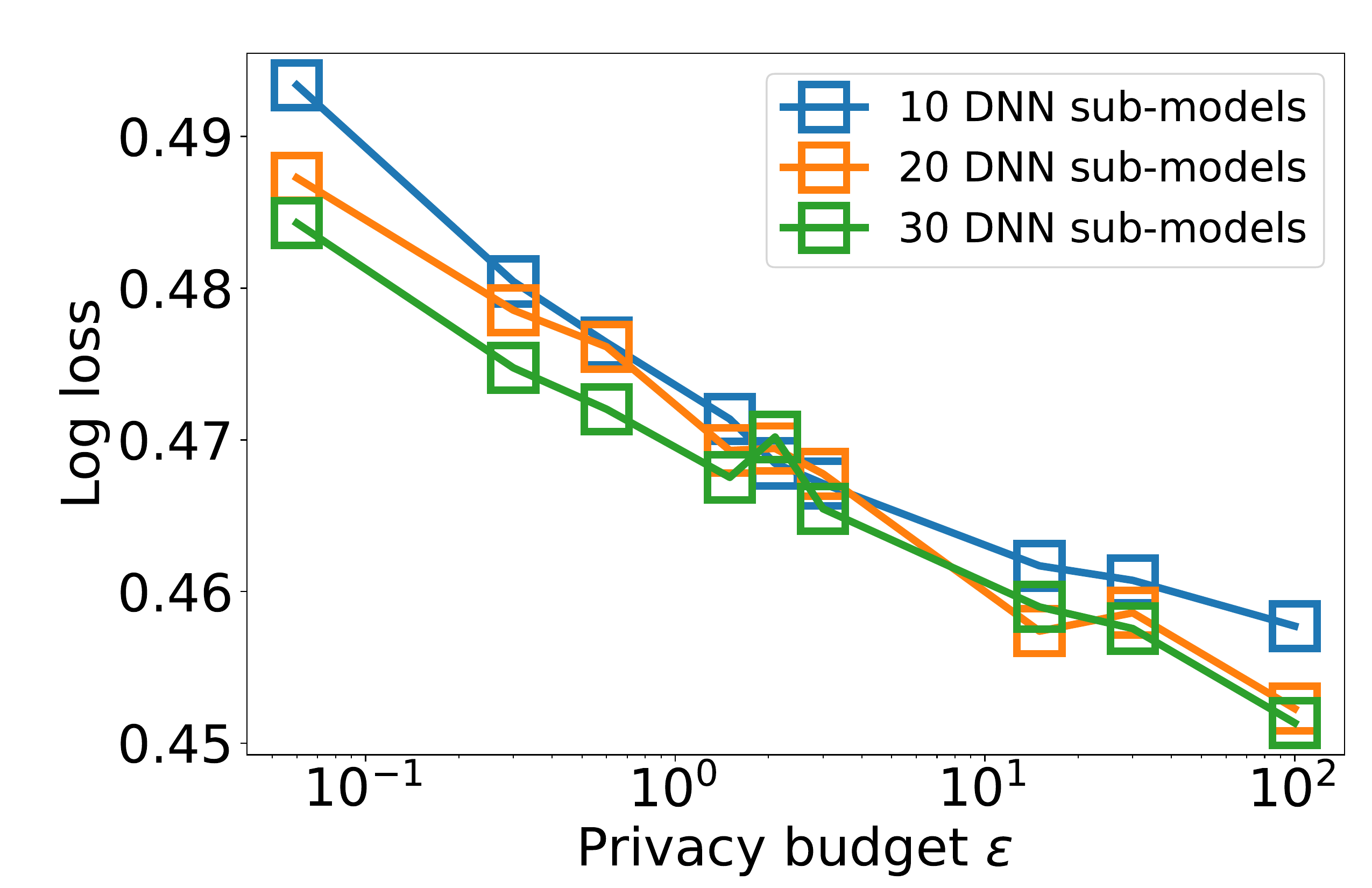}
         \caption{Test error decreases with lower DP level $\eps$ under curated bags\label{fig:sub_models_vs_eps}}
     \end{subfigure}~
    \begin{subfigure}[t]{0.32\linewidth}
         \centering
        \includegraphics[width=\linewidth]{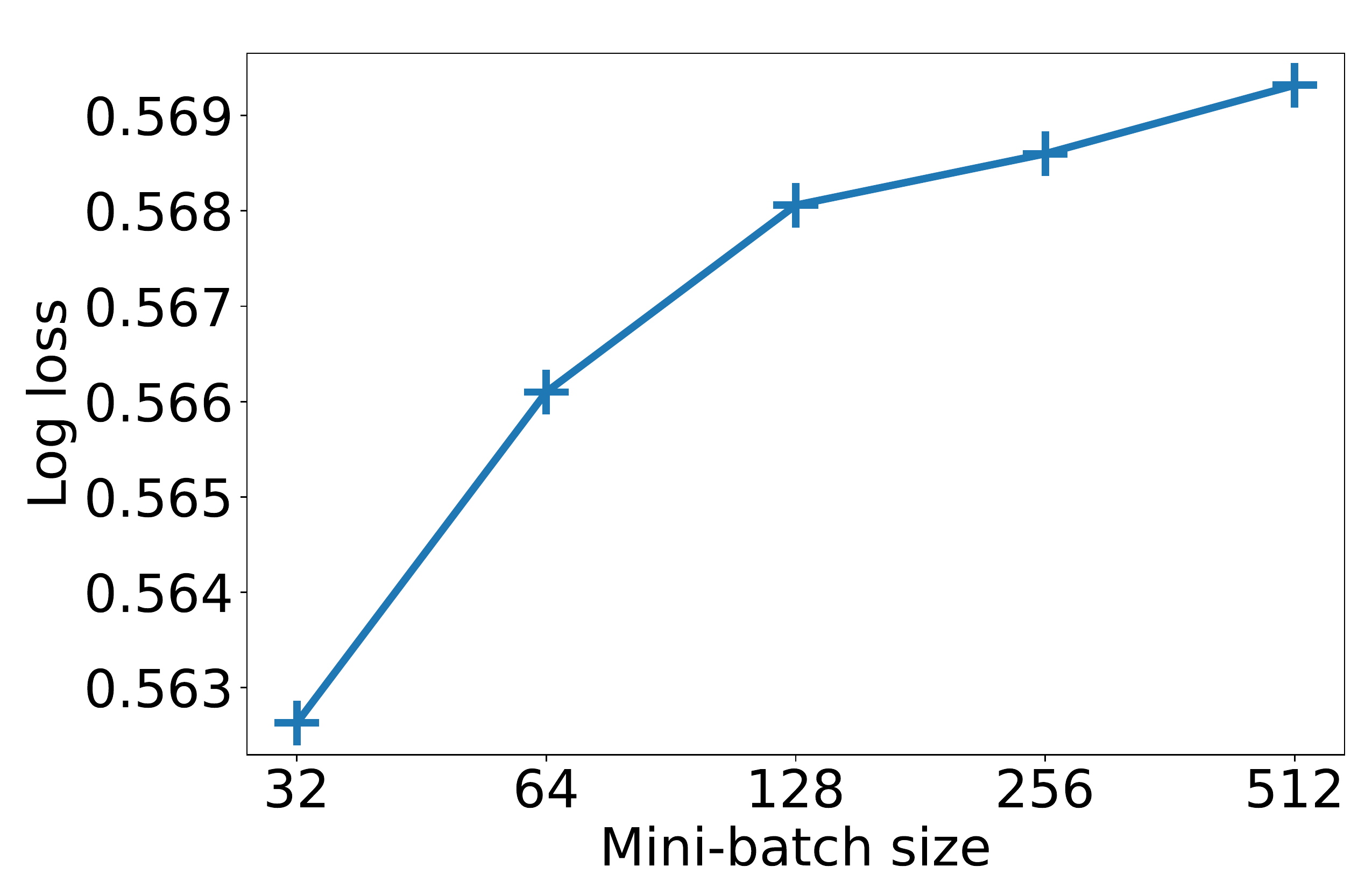}
         \caption{Test error increases with larger mini-batch size under random bags\label{fig:dnn_random_bag}}
     \end{subfigure} 
    \caption{The test log loss of the generalized additive model using neural nets as sub-models (DNN) and the generalized additive model using linear feature crosses as sub-models (linear feature crosses) on the Criteo Small dataset.\label{fig:experiment_result}}
    \label{fig:my_label}
\end{figure*}

We assess the effectiveness of our proposed algorithm on the Criteo Ads dataset~\citep{criteodata}. The dataset contains 41 million records with 13 integer and 26 categorical features. We convert the integer features into categorical features, resulting in a total of 39 categorical features. The data from the first 6 days is used for training, and the data from the 7th day is divided randomly into validation and test sets of equal size.
We select a specific number of feature column pairs and use curated bags to form bags for the dataset. We introduce noise to ensure that the aggregate label is $\eps$-differentially private at various privacy budgets $\eps$. For each chosen pair of feature columns, a single aggregate label will be generated. Each training sample will have the same number of aggregate labels.
A generalized additive model with multiple sub-models is constructed. Each sub-model is a multi-layer neural network that outputs a logit. The overall logit of the model is calculated by summing the logits of all the sub-models. The training loss is the sum of the losses for each aggregate label. The gradients of each sub-model's training loss can only be applied to its own parameters.
We evaluate the effectiveness of generalized additive models by using neural nets as sub-models and compare the results to those obtained by using linear feature crosses as sub-models. We refer to the generalized additive model that utilizes neural nets as sub-models as DNN and the one that uses linear feature crosses as sub-models as linear feature crosses.

We first show that our proposed curated bags method can learn effectively from aggregate labels without any loss in performance. In \cref{fig:noise_free}, we compare the proposed generalized additive models with DNN sub-models using individual labels and aggregate labels. We also compare them with the same model structure with linear sub-models using aggregate labels. From \cref{fig:noise_free}, we can see that the performance curves of DNN on individual labels and aggregate labels are identical, which confirms that the quality of the generalized additive model on aggregate labels is the same as that of individual labels, confirming our theory. Additionally, the performance of the linear feature crosses model is inferior to that of DNN, highlighting the superiority of our proposed architecture under curated bags, and showing that it can outperform previous aggregate learning methods. This further demonstrates the great potential of our proposed method in aggregate learning.
We then assess the effect of different noise levels on the performance of the proposed model structures when applied to data that has been made $\eps$-label differentially private through the addition of noise. The results are presented in \cref{fig:sub_models_vs_eps}. We observe that as the noise level increases, the test errors also increase. Conversely, increasing the number of DNN sub-models results in a decrease in test errors.
In the final experiment, we evaluate the performance of using random bags for comparison. We train a DNN model using all the features and aggregate labels of randomly selected examples on each mini-batch, and vary the mini-batch sizes. The results, shown in \cref{fig:dnn_random_bag}, indicate that larger batch sizes result in higher test errors. Compared with curated bags, random bags have higher test errors consistently in the experiment. This experiment demonstrates that using curated bags is a more effective method than using random bags.

\section{Conclusion}\label{sec:conclusion}

This paper examines curated bags and random bags for generating  aggregate labels. We demonstrate that the learner can achieve lossless learning from aggregate labels when using curated bags. We also study the learnability problem of random bags. Our empirical study shows that our method for curated bags achieves lossless learning from aggregate labels, has a reasonable privacy-utility trade-off when using differential privacy noise, and outperforms random bags.

\bibliographystyle{plainnat}
\bibliography{learn_sample}

\newpage
\onecolumn
\appendix
\section{Societal Impact}\label{sec:societal}

Our work can have a positive societal impact as it enables solutions with good utility and privacy tradeoff in the context of aggregate learning. Our approach allows for the development of more accurate and useful models while also protecting the sensitive information of individuals.  This balance between data utility and privacy protection is crucial in today's data-driven society and can have wide-reaching benefits for various industries, such as healthcare and finance. Additionally, it can also foster trust in the use of data and machine learning among the general public. Overall, our work helps to promote responsible data use and can lead to a more equitable and just society. However, it is important to note that this approach also has limitations, such as label differential privacy, which only protects labels and not feature columns and can be misused.

\section{Proof of \cref{thm:lossless-general}}

\begin{proof}[Proof of \cref{thm:lossless-general}]
We compute the total loss on the training dataset
\begin{equation*}
    \begin{split}
   & \sum_{(x,y)\in \Str} \frac{\partial \ell(y,\hat{y})}{\partial \beta_{j_0}}\\ ={}& \sum_{(x,y)\in \Str}(\nabla_{\hat{y}} b(\hat{y}) - T(y))^\top\left( \sum_{j\in [n_E]:j_0\in E_j} \frac{\partial f(x_{E'_j};\beta_{E_j})}{\partial \beta_{j_0}}\right)\\
    ={}& \sum_{j\in [n_E]:j_0\in E_j} \sum_{(x,y)\in \Str}(\nabla_{\hat{y}} b(\hat{y}) - T(y))^\top\left(  \frac{\partial f(x_{E'_j};\beta_{E_j})}{\partial \beta_{j_0}}\right)\\
    ={}& \sum_{j\in [n_E]:j_0\in E_j} 
    \sum_{(X,\bar{y})\in \CB(C_{\phi(j)})}
    \sum_{x\in X}
    (\nabla_{\hat{y}} b(\hat{y}) - T(y))^\top\left(  \frac{\partial f(x_{E'_j};\beta_{E_j})}{\partial \beta_{j_0}}\right)\,.
\end{split}
\end{equation*}
In the last line of the above equation, $y$ and $\hat{y}$ in the summand are the true label and model prediction of $x\in X$, respectively. Recall that the feature value of the feature columns $C_{\phi(j)}$ is identical and equal to $E'_j(X)$ for every $x\in X$ due to the construction of curated bags. As a result, we can take $\frac{\partial f(x_{E'_j};\beta_{E_j})}{\partial \beta_{j_0}}$ outside of the innermost summation and obtain \begin{equation*}
    \begin{split}
        & \sum_{(x,y)\in \Str} \frac{\partial \ell(y,\hat{y})}{\partial \beta_{j_0}}\\ ={}& \sum_{j\in [n_E]:j_0\in E_j} 
    \sum_{(X,\bar{y})\in \CB(C_{\phi(j)})}
   \left(  \frac{\partial f(E'_j(X);\beta_{E_j})}{\partial \beta_{j_0}}\right)^\top \sum_{x\in X}
    (\nabla_{\hat{y}} b(\hat{y}) - T(y))\\
    ={}& \sum_{j\in [n_E]:j_0\in E_j} 
    \sum_{(X,\bar{y})\in \CB(C_{\phi(j)})}
   \left(  \frac{\partial f(E'_j(X);\beta_{E_j})}{\partial \beta_{j_0}}\right)^\top \sum_{x\in X}
    (\nabla_{\hat{y}} b(\hat{y}) - \bar{y})\,.
    \end{split}
\end{equation*}
The last line of the equation above uses the definition of the aggregate label, $\bar{y} = \frac{1}{|X|}\sum_x\in X T(y)$, where $y$ is the true label of $x$. We do not introduce notation to emphasize the dependence of $y$ on $x$ in order to avoid complicated notation. 
\end{proof}

\section{Proof of \cref{thm:rademacher}}\label{sec:proof-learnability}

We define two auxiliary risks that use mean squared error and we will use them throughout this section. The first one is the expected Euclidean distance between the average prediction of the hypothesis $h$ on the bag of instances $X$ and its corresponding aggregate label $\bar{y}$:

$$R_{1}(h) = \mathbb{E}_{X,\bar{y}\sim\ag(\ep)}\left[\left\Vert \frac{1}{m}\sum_{x\in X}h(x)-\bar{y}\right\Vert _{2}^{2}\right]$$

Given the bags $X_1,\dots,X_n$ and their aggregate labels $\bar{y}_1,\dots,\bar{y}_n$, the corresponding empirical risk can be written as follows:

$$\hat{R}_{1}(h) = \frac{1}{n}\sum_{i\in[n]}\left\Vert \frac{1}{m}\sum_{x\in X_{i}}h(x)-\bar{y}_{i}\right\Vert _{2}^{2}$$

\cref{lem:e_avr_wor} computes the expected value of the square of the norm of the average of vectors sampled \emph{without replacement} from a finite set. 

\begin{lem}
\label{lem:e_avr_wor}Let $\{x_{i}\mid i\in[m]\}$ be sampled uniformly
from a finite set $S$ ($|S|=N$) without replacement. Then we have
\[
\bE\left\Vert \frac{1}{m}\sum_{i\in[m]}x_{i}\right\Vert _{2}^{2}=\frac{\left(m-1\right)N\left\Vert \bE_{x\sim\unif(S)}x\right\Vert _{2}^{2}+\left(N-m\right)\bE_{x\sim\unif(S)}\left\Vert x\right\Vert _{2}^{2}}{m\left(N-1\right)}\,.
\]
\end{lem}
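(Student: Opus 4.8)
The plan is to compute $\bE\|\frac{1}{m}\sum_{i\in[m]} x_i\|_2^2$ by expanding the square and using linearity of expectation, which reduces the problem to two kinds of terms: the ``diagonal'' terms $\bE\|x_i\|_2^2$ and the ``off-diagonal'' terms $\bE\langle x_i, x_j\rangle$ for $i\neq j$. The key observation is that because the sampling is uniform without replacement, each single draw $x_i$ is marginally distributed as $\unif(S)$, and each ordered pair $(x_i,x_j)$ with $i\neq j$ is distributed as a uniform draw of two distinct elements from $S$. So the whole computation collapses to evaluating these two symmetric quantities once and multiplying by the appropriate counts.

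Concretely, I would first write
\[
\bE\left\Vert \frac{1}{m}\sum_{i\in[m]}x_{i}\right\Vert _{2}^{2}
= \frac{1}{m^2}\left( \sum_{i\in[m]} \bE\|x_i\|_2^2 + \sum_{i\neq j} \bE\langle x_i,x_j\rangle \right)
= \frac{1}{m^2}\left( m\, \mu_2 + m(m-1)\, \rho \right),
\]
where $\mu_2 \triangleq \bE_{x\sim\unif(S)}\|x\|_2^2$ and $\rho \triangleq \bE\langle x_1,x_2\rangle$ for $(x_1,x_2)$ a uniformly random ordered pair of distinct elements of $S$. The second step is to evaluate $\rho$. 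Writing $S=\{s_1,\dots,s_N\}$, we have $\rho = \frac{1}{N(N-1)}\sum_{k\neq l}\langle s_k, s_l\rangle$. Using $\sum_{k\neq l}\langle s_k,s_l\rangle = \|\sum_k s_k\|_2^2 - \sum_k \|s_k\|_2^2 = N^2 \|\mu\|_2^2 - N\mu_2$ where $\mu \triangleq \bE_{x\sim\unif(S)} x = \frac1N\sum_k s_k$, we get $\rho = \frac{N\|\mu\|_2^2 - \mu_2}{N-1}$. Substituting back,
\[
\bE\left\Vert \frac{1}{m}\sum_{i\in[m]}x_{i}\right\Vert _{2}^{2}
= \frac{\mu_2}{m} + \frac{(m-1)(N\|\mu\|_2^2 - \mu_2)}{m(N-1)}
= \frac{(N-1)\mu_2 + (m-1)N\|\mu\|_2^2 - (m-1)\mu_2}{m(N-1)},
\]
and collecting the $\mu_2$ coefficient gives $(N-1)-(m-1) = N-m$, which yields exactly the claimed formula $\frac{(m-1)N\|\mu\|_2^2 + (N-m)\mu_2}{m(N-1)}$.

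There is no real obstacle here — the argument is a short second-moment computation. The only point requiring a moment's care is justifying that every ordered pair $(x_i,x_j)$ with $i\neq j$ has the same distribution as a uniform draw of an ordered pair of distinct elements; this follows from the exchangeability of sampling without replacement (the joint distribution of $(x_1,\dots,x_m)$ is invariant under permutations, and the marginal on any two coordinates is the uniform distribution on ordered distinct pairs). Once that symmetry is in hand, the rest is algebra. I would also double-check the degenerate case $m=1$: the formula then reads $\frac{0 + (N-1)\mu_2}{1\cdot(N-1)} = \mu_2 = \bE\|x\|_2^2$, as it should.
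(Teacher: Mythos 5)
Your proof is correct and follows essentially the same route as the paper's: expand the square, use exchangeability to reduce the diagonal terms to $\bE_{x\sim\unif(S)}\|x\|_2^2$ and the cross terms to a uniform ordered distinct pair, and evaluate the latter via $\sum_{k\neq l}\langle s_k,s_l\rangle = \|\sum_k s_k\|_2^2 - \sum_k\|s_k\|_2^2$. The final algebra and the sanity check at $m=1$ are both correct.
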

\begin{proof}
We have
\begin{align*}
 & \bE\left\Vert \frac{1}{m}\sum_{i\in[m]}x_{i}\right\Vert _{2}^{2}\\
= & \frac{1}{m^{2}}\bE\left(\sum_{i\in[m]}\left\Vert x_{i}\right\Vert _{2}^{2}+\sum_{i\in[m]}\sum_{j\in[m]\setminus\{i\}}x_{i}^{\t}x_{j}\right)\\
= & \frac{1}{m^{2}}\left(m\bE_{x\sim\unif\left(S\right)}\left\Vert x\right\Vert _{2}^{2}+\bE\sum_{i\in[m]}\sum_{j\in[m]\setminus\{i\}}x_{i}^{\t}x_{j}\right)\,.
\end{align*}
The expected sum of cross terms is given by
\begin{align*}
\bE\sum_{i\in[m]}\sum_{j\in[m]\setminus\{i\}}= & \frac{m\left(m-1\right)}{N\left(N-1\right)}\sum_{x\in S}\sum_{y\in S\setminus\{x\}}x^{\t}y=\frac{m\left(m-1\right)}{N\left(N-1\right)}\left(\left\Vert \sum_{x\in S}x\right\Vert _{2}^{2}-\sum_{x\in S}\left\Vert x\right\Vert _{2}^{2}\right)\\
= & \frac{m\left(m-1\right)}{N-1}\left(N\left\Vert \bE_{x\sim\unif(S)}x\right\Vert _{2}^{2}-\bE_{x\sim\unif(S)}\left\Vert x\right\Vert _{2}^{2}\right)\,.
\end{align*}
Therefore, 
\[
\bE\left\Vert \frac{1}{m}\sum_{i\in[m]}x_{i}\right\Vert _{2}^{2}=\frac{\left(m-1\right)N\left\Vert \bE_{x\sim\unif\left(S\right)}x\right\Vert _{2}^{2}+\left(N-m\right)\bE_{x\sim\unif\left(S\right)}\left\Vert x\right\Vert _{2}^{2}}{m\left(N-1\right)}\,.
\]
\end{proof}

\cref{lem:R1(h)} presents an expression for the alternative risk $R_1(h)$. 
\begin{lem}
\label{lem:R1(h)}We have
\[
\begin{split}
    R_{1}(h)={}& \frac{\left(m-1\right)N\left\Vert \bE_{\left(x,y\right)\sim\ep}\left(h(x)-y\right)\right\Vert _{2}^{2}+\left(N-m\right)\bE_{\left(x,y\right)\sim\ep}\left\Vert h(x)-y\right\Vert _{2}^{2}}{m\left(N-1\right)}\\
    & +\left(1-\frac{1}{m}\right)\var_{y\sim\ep\mid_{y}}\left\Vert y\right\Vert _{2}^{2}\,.
\end{split}
\]
\end{lem}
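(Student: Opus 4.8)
The plan is to expand the definition of $R_1(h)$ by first conditioning on the draw of the bag $S_i = \{(x_{i,j}, y_{i,j})\}_{j \in [m]}$, so that the only remaining randomness inside is that of the aggregate label $\bar{y}_i$, and then to take expectations over the random bag using \cref{lem:e_avr_wor}. First I would write
\[
R_1(h) = \bE_{S}\,\bE_{\bar{y}\mid S}\left\Vert \frac{1}{m}\sum_{j\in[m]}h(x_j) - \bar{y}\right\Vert_2^2,
\]
and use the standard bias--variance split $\bE\|Z-\bar y\|_2^2 = \|Z - \bE[\bar y]\|_2^2 + \bE\|\bar y - \bE[\bar y]\|_2^2$ applied to the inner conditional expectation. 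Since $\bar{y}\mid y_{1},\dots,y_{m}\sim \ber(\frac1m\sum_j y_{j})$ coordinatewise, we have $\bE[\bar y \mid S] = \frac1m\sum_{j\in[m]} y_j$, so the bias term becomes $\left\Vert \frac1m\sum_{j\in[m]}(h(x_j)-y_j)\right\Vert_2^2$, and the conditional variance term is $\sum_{k\in[K]} \frac1m\bar p_k(1-\bar p_k)$ where $\bar p_k = \frac1m\sum_j y_{j,k}$; since $y_{j,k}\in\{0,1\}$, $\bar p_k(1-\bar p_k) \le 1/4$ but more usefully $\frac1m \bar p_k (1-\bar p_k)$ can be rewritten exactly. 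Actually the cleanest route: $\bE\|\bar y - \bE[\bar y\mid S]\|_2^2 = \frac1m\big(\bE_{\text{unif}(S\text{-part})}\|y\|_2^2 - \|\frac1m\sum_j y_j\|_2^2\big)$ evaluated on the $m$ sampled labels — but since each $y_{j,k}\in\{0,1\}$, $\|y\|_2^2 = \sum_k y_{j,k} = \sum_k y_{j,k}^2$, so this contributes a term that combines with the rest.

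Next I would apply \cref{lem:e_avr_wor} with the finite set being $\Str$ (so $|S| = N$) and the sampled vectors being $h(x_j) - y_j$ for the bias term: taking $\bE_S$ of $\left\Vert \frac1m\sum_{j\in[m]}(h(x_j)-y_j)\right\Vert_2^2$ gives exactly
\[
\frac{(m-1)N\left\Vert \bE_{(x,y)\sim\ep}(h(x)-y)\right\Vert_2^2 + (N-m)\bE_{(x,y)\sim\ep}\left\Vert h(x)-y\right\Vert_2^2}{m(N-1)}.
\]
That accounts for the first displayed fraction in the claim. For the conditional-variance contribution, I would take $\bE_S$ of $\frac1m\big(\frac1m\sum_j \|y_j\|_2^2 - \|\frac1m\sum_j y_j\|_2^2\big)$; the first piece is just $\frac1m \bE_{y\sim\ep\mid_y}\|y\|_2^2$ and the second, again by \cref{lem:e_avr_wor} applied to the vectors $y_j$, is $\frac1m$ times a convex combination of $\|\bE y\|_2^2$ and $\bE\|y\|_2^2$; combining and simplifying the coefficients should collapse to $\left(1-\frac1m\right)\var_{y\sim\ep\mid_y}\|y\|_2^2$ once one recalls $\var\|y\|^2$... wait — here I need to be careful: the claimed leftover term is $(1-\frac1m)\var_{y\sim\ep|_y}\|y\|_2^2$, and since $\|y\|_2^2 = \|y\|_1$ for $0/1$ vectors this is a scalar variance. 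I would verify this identity by matching coefficients of $\bE\|y\|^2$ and $\|\bE y\|^2$ between the two displayed expressions; the $(N-m)/(m(N-1))$ coefficient on $\bE\|h(x)-y\|^2$ partially contains a $\bE\|y\|^2$ piece when expanded, and these must be reconciled with the variance term — so the bookkeeping has to be done as an identity in the two moments of $y$ (and the cross moment $\bE[h(x)^\top y]$, $\bE\|h(x)\|^2$), not term by term.

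The main obstacle I anticipate is precisely this last reconciliation: making sure that when the conditional variance of $\bar y$ (a quantity depending only on the sampled labels, hence a quadratic in the $y_j$'s) is pushed through \cref{lem:e_avr_wor}, its output combines correctly with the $\bE\|h(x)-y\|_2^2$ and $\|\bE(h(x)-y)\|_2^2$ terms to yield exactly the stated coefficients $\frac{N-m}{m(N-1)}$, $\frac{(m-1)N}{m(N-1)}$, and the clean residual $1-\frac1m$. This is pure algebra — expanding $\|h(x)-y\|_2^2 = \|h(x)\|_2^2 - 2h(x)^\top y + \|y\|_2^2$ and collecting — but it is the step where a sign or a factor of $m/(N-1)$ is easiest to drop, so I would carry it out carefully rather than by inspection. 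Everything else (conditioning, the bias--variance split, invoking \cref{lem:e_avr_wor} twice) is routine.
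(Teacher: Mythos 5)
Your overall strategy coincides with the paper's: condition on the bag $S=\{(x_j,y_j)\}_{j\in[m]}$, split $\bE\bigl\Vert \frac1m\sum_j h(x_j)-\bar y\bigr\Vert_2^2$ into the squared bias $\bigl\Vert \frac1m\sum_j(h(x_j)-y_j)\bigr\Vert_2^2$ plus the conditional variance of $\bar y$ (the cross term vanishes because $\bE[\bar y\mid S]=\frac1m\sum_j y_j$), and apply \cref{lem:e_avr_wor} to the vectors $h(x_j)-y_j$ drawn without replacement from an $N$-element set. That part is correct and yields the displayed fraction exactly as in the paper. Also, the ``reconciliation of cross moments'' you flag as the main obstacle does not arise: the two pieces of the split are handled entirely separately, and no expansion of $\Vert h(x)-y\Vert_2^2$ into $\bE\Vert h(x)\Vert_2^2$, $\bE[h(x)^\top y]$, $\bE\Vert y\Vert_2^2$ is needed.

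The genuine error is in the conditional-variance term. By construction $\bar y\mid S$ is a \emph{single} Bernoulli draw with parameter vector $\bar p=\frac1m\sum_j y_j$, not an average of $m$ independent Bernoullis, so its conditional variance in coordinate $k$ is $\bar p_k(1-\bar p_k)$ with no factor of $1/m$. You write $\frac1m\bar p_k(1-\bar p_k)$, and your ``cleanest route'' identity $\bE\bigl[\Vert\bar y-\bE[\bar y\mid S]\Vert_2^2\mid S\bigr]=\frac1m\bigl(\frac1m\sum_j\Vert y_j\Vert_2^2-\Vert\frac1m\sum_j y_j\Vert_2^2\bigr)$ encodes the same extraneous $1/m$; carried through, it produces a residual roughly $1/m$ of the claimed $\bigl(1-\frac1m\bigr)\var_{y\sim\ep\mid_y}\Vert y\Vert_2^2$, so the derivation as outlined cannot close. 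The paper instead computes $\bE_S[\bar p_k(1-\bar p_k)]=\bE y[k]-\bE[\bar p_k^2]$, uses $y[k]^2=y[k]$, and sums over $k$. One further caution: if, as you propose, you evaluate $\bE[\bar p_k^2]$ by a second application of \cref{lem:e_avr_wor} (the without-replacement second moment), the residual comes out as $\frac{N(m-1)}{m(N-1)}\sum_k\var(y[k])$ rather than $\bigl(1-\frac1m\bigr)\sum_k\var(y[k])$ — the paper's displayed computation uses the i.i.d.\ formula $\bE[\bar p_k^2]=\frac1m\bE y[k]^2+(1-\frac1m)(\bE y[k])^2$ at that step — so your coefficient-matching would not land exactly on the stated constant. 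Since this term is independent of $h$ it is harmless downstream, but it means the ``clean residual $1-\frac1m$'' you hope to verify is not what your route produces.
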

\begin{proof}
We have

\begin{align}
R_{1}(h) & =\bE_{X,\bar{y}\sim\ag(\ep)}\left[\left\Vert \frac{1}{m}\sum_{j\in[m]}h(x_{j})-\frac{1}{m}\sum_{j\in[m]}y_{j}+\frac{1}{m}\sum_{j\in[m]}y_{j}-\bar{y}\right\Vert _{2}^{2}\right]\nonumber \\
 & =\bE_{X,\bar{y}\sim\ag(\ep)}\left[\left\Vert \frac{1}{m}\sum_{j\in[m]}h(x_{j})-\frac{1}{m}\sum_{j\in[m]}y_{j}\right\Vert _{2}^{2}\right]+\bE_{(X,\bar{y})\sim\ag(\ep)}\left[\left\Vert \frac{1}{m}\sum_{j\in[m]}y_{j}-\bar{y}\right\Vert _{2}^{2}\right]\nonumber \\
 & =\bE_{X,\bar{y}\sim\ag(\ep)}\left[\left\Vert \frac{1}{m}\sum_{j\in[m]}\left(h(x_{j})-y_{j}\right)\right\Vert _{2}^{2}\right]+\bE_{(X,\bar{y})\sim\ag(\ep)}\left[\left\Vert \frac{1}{m}\sum_{j\in[m]}y_{j}-\bar{y}\right\Vert _{2}^{2}\right]\,.\label{eq:2-terms}
\end{align}
First, we compute the second term in \eqref{eq:2-terms}
\begin{align*}
 & \bE_{(X,\bar{y})\sim\ag(\ep)}\left[\left\Vert \frac{1}{m}\sum_{j\in[m]}y_{j}-\bar{y}\right\Vert _{2}^{2}\right]\\
= & \bE_{(X,\bar{y})\sim\ag(\ep)}\sum_{k\in[K]}\left(\frac{1}{m}\sum_{j\in[m]}y_{j}[k]-\bar{y}[k]\right)^{2}\\
= & \sum_{k\in[K]}\bE\left[\bE\left[\left(\frac{1}{m}\sum_{j\in[m]}y_{j}[k]-\bar{y}[k]\right)^{2}\mid\frac{1}{m}\sum_{j\in[m]}y_{j}[k]\right]\right]\\
= & \sum_{k\in[K]}\bE\left[\left(\frac{1}{m}\sum_{j\in[m]}y_{j}[k]\right)\left(1-\frac{1}{m}\sum_{j\in[m]}y_{j}[k]\right)\right]\\
= & \sum_{k\in[K]}\left(\bE_{y\sim\ep\mid_{y}}\left[y[k]\right]-\left(\frac{1}{m}\bE_{y\sim\ep\mid_{y}}\left[y[k]^{2}\right]+\left(1-\frac{1}{m}\right)\left[\bE_{y\sim\ep\mid_{y}}y[k]\right]^{2}\right)\right)\\
= & \left(1-\frac{1}{m}\right)\var_{y\sim\ep\mid_{y}}\left\Vert y\right\Vert _{2}^{2}
\end{align*}
In the sequel, we  compute the first term in \eqref{eq:2-terms}. By
\cref{lem:e_avr_wor}, we have
\begin{align*}
 & \bE_{X,\bar{y}\sim\ag(\ep)}\left[\left\Vert \frac{1}{m}\sum_{j\in[m]}\left(h(x_{j})-y_{j}\right)\right\Vert _{2}^{2}\right]\\
= & \frac{\left(m-1\right)N\left\Vert \bE_{\left(x,y\right)\sim\ep}\left(h(x)-y\right)\right\Vert _{2}^{2}+\left(N-m\right)\bE_{\left(x,y\right)\sim\ep}\left\Vert h(x)-y\right\Vert _{2}^{2}}{m\left(N-1\right)}\,.
\end{align*}
Putting them together yields
\[
\begin{split}
    R_{1}(h)={}& \frac{\left(m-1\right)N\left\Vert \bE_{\left(x,y\right)\sim\ep}\left(h(x)-y\right)\right\Vert _{2}^{2}+\left(N-m\right)\bE_{\left(x,y\right)\sim\ep}\left\Vert h(x)-y\right\Vert _{2}^{2}}{m\left(N-1\right)}\\
    & +\left(1-\frac{1}{m}\right)\var_{y\sim\ep\mid_{y}}\left\Vert y\right\Vert _{2}^{2}\,.
\end{split}
\]
\end{proof}
\begin{lem}
\label{lem:R-infR}Define $\hat{r}(h)\triangleq\left\Vert \frac{1}{n}\sum_{i\in[n]}\left(\frac{1}{m}\sum_{x\in X_{i}}h(x)-\bar{y}_{i}\right)\right\Vert _{2}^{2}$
and
\begin{align}
\Delta_{1}(h) & \triangleq\bE_{(x,y)\sim\ep}\left\Vert h(x)-y\right\Vert _{2}^{2}-R(h)\\
& =\bE_{(x,y)\sim\ep}\left\Vert h(x)-y\right\Vert _{2}^{2}-\bE_{(x,y)\sim\cD}\left[\left\Vert h(x)-y\right\Vert _{2}^{2}\right]\in\bR\,,\label{eq:Delta_1}\\
\Delta_{3}(h) & \triangleq\left\Vert \bE_{(x,y)\sim\ep}\left(h(x)-y\right)\right\Vert _{2}^{2}-\left\Vert \frac{1}{n}\sum_{i\in[n]}\left(\frac{1}{m}\sum_{x\in X_{i}}h(x)-\bar{y}_{i}\right)\right\Vert _{2}^{2}\\
& =\left\Vert \bE_{(x,y)\sim\ep}\left(h(x)-y\right)\right\Vert _{2}^{2}-\hat{r}(h)\,.\label{eq:Delta_3}
\end{align}
If $\hat{h}\in\argmin_{h\in\cH}\left(\hat{R}_{1}(h)-\frac{\left(m-1\right)N}{m\left(N-1\right)}\hat{r}(h)\right)$,
we have
\[
R(\hat{h})-\inf_{h\in\cH}R(h)\le\frac{2\left(m-1\right)N}{N-m}\sup_{h\in\cH}\left\vert \Delta_{3}(h)\right\vert +2\sup_{h\in\cH}\left\vert \Delta_{1}(h)\right\vert +\frac{2m\left(N-1\right)}{N-m}\sup_{h\in\cH}\left\vert R_{1}(h)-\hat{R}_{1}(h)\right\vert \,.
\]
\end{lem}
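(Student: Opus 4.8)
The plan is to turn the definition of $\hat h$ into a risk bound by expressing the true risk $R(h)$ as an affine function of the objective $\hat R_1(h)-\tfrac{(m-1)N}{m(N-1)}\hat r(h)$ that $\hat h$ minimizes, plus three controllable error terms, and then running the usual ``the minimizer beats any competitor'' comparison. To this end I would first apply \cref{lem:R1(h)}. Writing $A(h)\triangleq\|\bE_{(x,y)\sim\ep}(h(x)-y)\|_2^2$ and $B(h)\triangleq\bE_{(x,y)\sim\ep}\|h(x)-y\|_2^2$, that lemma reads
\[
R_1(h)=\frac{(m-1)N\,A(h)+(N-m)\,B(h)}{m(N-1)}+c_0,\qquad c_0\triangleq\Big(1-\tfrac1m\Big)\var_{y\sim\ep\mid_{y}}\|y\|_2^2,
\]
where $c_0$ is the same for every $h$. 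Solving for $B(h)$ and using $m<N$ (so that $\kappa\triangleq\frac{m(N-1)}{N-m}>0$) gives $B(h)=\kappa\big(R_1(h)-\tfrac{(m-1)N}{m(N-1)}A(h)-c_0\big)$.

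Second, I would substitute the two ``estimation error'' identities that hold by the definitions of $\Delta_3$ and $\Delta_1$, namely $A(h)=\hat r(h)+\Delta_3(h)$ and $B(h)=R(h)+\Delta_1(h)$, together with $R_1(h)=\hat R_1(h)+\big(R_1(h)-\hat R_1(h)\big)$. Collecting terms yields the exact decomposition
\[
R(h)=\kappa\Big(\hat R_1(h)-\tfrac{(m-1)N}{m(N-1)}\hat r(h)\Big)+\kappa\big(R_1(h)-\hat R_1(h)\big)-\tfrac{(m-1)N}{N-m}\Delta_3(h)-\Delta_1(h)-\kappa c_0,
\]
valid for every $h\in\cH$.

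Third, I would evaluate this identity at $\hat h$ and at an arbitrary $h\in\cH$ and subtract: the constants $\kappa c_0$ cancel; the difference $\kappa\big(\hat R_1(\hat h)-\tfrac{(m-1)N}{m(N-1)}\hat r(\hat h)\big)-\kappa\big(\hat R_1(h)-\tfrac{(m-1)N}{m(N-1)}\hat r(h)\big)$ is $\le 0$ because $\hat h$ minimizes precisely that objective and $\kappa>0$; and each of the three remaining differences is bounded by twice the corresponding supremum over $\cH$ (here using $\frac{(m-1)N}{N-m}\ge 0$, again from $1\le m<N$). This gives
\[
R(\hat h)-R(h)\le\frac{2(m-1)N}{N-m}\sup_{h'\in\cH}|\Delta_3(h')|+2\sup_{h'\in\cH}|\Delta_1(h')|+\frac{2m(N-1)}{N-m}\sup_{h'\in\cH}|R_1(h')-\hat R_1(h')|,
\]
and taking the supremum over $h\in\cH$ on the left (equivalently, the infimum of $R(h)$; if the infimum is not attained, apply the bound to a minimizing sequence and let the gap tend to $0$) gives the claimed inequality.

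The entire argument is bookkeeping once the decomposition is in place; the only step that needs care is the first one—correctly inverting the formula of \cref{lem:R1(h)} to solve for $R(h)$, observing that $c_0$ is $h$-independent and therefore drops out of all comparisons, and tracking the signs of the coefficients $\kappa$ and $\frac{(m-1)N}{N-m}$ so that the optimality of $\hat h$ and the triangle-type bounds are applied in the right direction. No probabilistic concentration is involved at this stage; controlling $\sup_{\cH}|\Delta_1|$, $\sup_{\cH}|\Delta_3|$ and $\sup_{\cH}|R_1-\hat R_1|$ by the Rademacher complexities appearing in \cref{thm:rademacher} is deferred to the remaining lemmas.
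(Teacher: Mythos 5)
Your proposal is correct and follows essentially the same route as the paper's proof: both rest on \cref{lem:R1(h)}, the same decomposition into $\Delta_1$, $\Delta_3$, and $R_1-\hat R_1$, the optimality of $\hat h$ for the regularized empirical objective, and a minimizing-sequence argument for the infimum. Your presentation via a single exact affine identity for $R(h)$ is a slightly cleaner packaging of the same algebra the paper carries out as a chain of pairwise difference manipulations.
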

\begin{proof}
By \cref{lem:R1(h)}, we have
\[
\begin{split}
    R_{1}(h)={}& \frac{\left(m-1\right)N\left\Vert \bE_{\left(x,y\right)\sim\ep}\left(h(x)-y\right)\right\Vert _{2}^{2}+\left(N-m\right)\left(\Delta_{1}(h)+R(h)\right)}{m\left(N-1\right)}\\
    & +\left(1-\frac{1}{m}\right)\var_{y\sim\ep\mid_{y}}\left\Vert y\right\Vert _{2}^{2}\,,
\end{split}
\]
which gives
\begin{equation}
\begin{split}
    R(h)-R_{1}(h)={}& \frac{\left(m-1\right)N}{m\left(N-1\right)}\left(R(h)-\left\Vert \bE_{(x,y)\sim\ep}\left(h(x)-y\right)\right\Vert _{2}^{2}\right)-\frac{N-m}{m\left(N-1\right)}\Delta_{1}(h)\\
    & -\left(1-\frac{1}{m}\right)\var_{y\sim\ep\mid_{y}}\left\Vert y\right\Vert _{2}^{2}\,.
\end{split}
\label{eq:R-R1}
\end{equation}

By \eqref{eq:R-R1}, for two hypotheses $\hat{h}$ and $h_{\eps}$,
we have

\begin{align}
 & \left(R(\hat{h})-R_{1}(\hat{h})\right)+\left(R_{1}(h_{\eps})-R(h_{\eps})\right)\nonumber \\
={} & \left(R(\hat{h})-R_{1}(\hat{h})\right)-\left(R(h_{\eps})-R_{1}(h_{\eps})\right)\nonumber \\
={} & \frac{\left(m-1\right)N}{m\left(N-1\right)}\left(R(\hat{h})-R(h_{\eps})+\left\Vert \bE_{(x,y)\sim\ep}\left(h_{\eps}(x)-y\right)\right\Vert _{2}^{2}-\left\Vert \bE_{(x,y)\sim\ep}\left(\hat{h}(x)-y\right)\right\Vert _{2}^{2}\right)\nonumber\\
& -\frac{N-m}{m\left(N-1\right)}\left(\Delta_{1}(\hat{h})-\Delta_{1}(h_{\eps})\right)\,.\label{eq:R-R1+R1-R}
\end{align}
Then we are in a position to compute and decompose $R(\hat{h})-R(h_{\eps})$
as follows
\begin{equation}
R(\hat{h})-R(h_{\eps})=\left(R(\hat{h})-R_{1}(\hat{h})\right)+\left(R_{1}(h_{\eps})-R(h_{\eps})\right)+\left(R_{1}(\hat{h})-R_{1}(h_{\eps})\right)\,.\label{eq:Rhhat-Rheps}
\end{equation}
Combining \eqref{eq:R-R1+R1-R} and \eqref{eq:Rhhat-Rheps}, we get
\begin{align*}
R(\hat{h})-R(h_{\eps})= & \frac{\left(m-1\right)N}{m\left(N-1\right)}\left(R(\hat{h})-R(h_{\eps})+\left\Vert \bE_{(x,y)\sim\ep}\left(h_{\eps}(x)-y\right)\right\Vert _{2}^{2}-\left\Vert \bE_{(x,y)\sim\ep}\left(\hat{h}(x)-y\right)\right\Vert _{2}^{2}\right)\\
 & -\frac{N-m}{m\left(N-1\right)}\left(\Delta_{1}(\hat{h})-\Delta_{1}(h_{\eps})\right)+\left(R_{1}(\hat{h})-R_{1}(h_{\eps})\right)\,.
\end{align*}
Re-arranging the terms gives
\begin{align*}
\frac{N-m}{m\left(N-1\right)}\left(R(\hat{h})-R(h_{\eps})\right)= & \frac{\left(m-1\right)N}{m\left(N-1\right)}\left(\left\Vert \bE_{(x,y)\sim\ep}\left(h_{\eps}(x)-y\right)\right\Vert _{2}^{2}-\left\Vert \bE_{(x,y)\sim\ep}\left(\hat{h}(x)-y\right)\right\Vert _{2}^{2}\right)\\
 & -\frac{N-m}{m\left(N-1\right)}\left(\Delta_{1}(\hat{h})-\Delta_{1}(h_{\eps})\right)+\left(R_{1}(\hat{h})-R_{1}(h_{\eps})\right)\,.
\end{align*}
Therefore, we haave
\begin{align}
R(\hat{h})-R(h_{\eps})= & \frac{\left(m-1\right)N}{N-m}\left(\left\Vert \bE_{(x,y)\sim\ep}\left(h_{\eps}(x)-y\right)\right\Vert _{2}^{2}-\left\Vert \bE_{(x,y)\sim\ep}\left(\hat{h}(x)-y\right)\right\Vert _{2}^{2}\right)\label{eq:Rhhat-Rheps2}\\
 & -\left(\Delta_{1}(\hat{h})-\Delta_{1}(h_{\eps})\right)+\frac{m\left(N-1\right)}{N-m}\left(R_{1}(\hat{h})-R_{1}(h_{\eps})\right)\,.
\end{align}

Define $\hat{r}(h)\triangleq\left\Vert \frac{1}{n}\sum_{i\in[n]}\left(\frac{1}{m}\sum_{x\in X_{i}}h(x)-\bar{y}_{i}\right)\right\Vert _{2}^{2}$
. Thus $\left\Vert \bE_{(x,y)\sim\ep}\left(h(x)-y\right)\right\Vert _{2}^{2}=\Delta_{3}(h)+\hat{r}(h)$.
We have
\begin{align}
 & \left\Vert \bE_{(x,y)\sim\ep}\left(h_{\eps}(x)-y\right)\right\Vert _{2}^{2}-\left\Vert \bE_{(x,y)\sim\ep}\left(\hat{h}(x)-y\right)\right\Vert _{2}^{2}\nonumber \\
= & \left(\Delta_{3}(h_{\eps})+\hat{r}(h_{\eps})\right)-\left(\Delta_{3}(\hat{h})+\hat{r}(\hat{h})\right)\nonumber \\
= & \left(\Delta_{3}(h_{\eps})-\Delta_{3}(\hat{h})\right)+\left(\hat{r}(h_{\eps})-\hat{r}(\hat{h})\right)\label{eq:D-D-norm-norm}
\end{align}
The term $R_{1}(\hat{h})-R_{1}(h_{\eps})$ can be decomposed into
three terms
\begin{equation}
R_{1}(\hat{h})-R_{1}(h_{\eps})=\left(R_{1}(\hat{h})-\hat{R}_{1}(\hat{h})\right)+\left(\hat{R}_{1}(\hat{h})-\hat{R}_{1}(h_{\eps})\right)+\left(\hat{R}_{1}(h_{\eps})-R_{1}(h_{\eps})\right)\,.\label{eq:3R}
\end{equation}
Combining \eqref{eq:Rhhat-Rheps2}, \eqref{eq:D-D-norm-norm} and \eqref{eq:3R}
gives
\begin{align*}
R(\hat{h})-R(h_{\eps})= & \frac{\left(m-1\right)N}{N-m}\left(\left(\Delta_{3}(h_{\eps})-\Delta_{3}(\hat{h})\right)+\left(\hat{r}(h_{\eps})-\hat{r}(\hat{h})\right)\right)-\left(\Delta_{1}(\hat{h})-\Delta_{1}(h_{\eps})\right)\\
 & +\frac{m\left(N-1\right)}{N-m}\left(\left(R_{1}(\hat{h})-\hat{R}_{1}(\hat{h})\right)+\left(\hat{R}_{1}(\hat{h})-\hat{R}_{1}(h_{\eps})\right)+\left(\hat{R}_{1}(h_{\eps})-R_{1}(h_{\eps})\right)\right)\,.
\end{align*}
Re-arranging the terms, we have
\begin{align*}
R(\hat{h})-R(h_{\eps})= & \frac{\left(m-1\right)N}{N-m}\left(\Delta_{3}(h_{\eps})-\Delta_{3}(\hat{h})\right)-\left(\Delta_{1}(\hat{h})-\Delta_{1}(h_{\eps})\right)\\
 & +\frac{m\left(N-1\right)}{N-m}\left(\left(R_{1}(\hat{h})-\hat{R}_{1}(\hat{h})\right)+\left(\hat{R}_{1}(h_{\eps})-R_{1}(h_{\eps})\right)\right)\\
 & +\frac{m\left(N-1\right)}{N-m}\left(\left(\hat{R}_{1}(\hat{h})-\frac{\left(m-1\right)N}{m\left(N-1\right)}\hat{r}(\hat{h})\right)-\left(\hat{R}_{1}(h_{\eps})-\frac{\left(m-1\right)N}{m\left(N-1\right)}\hat{r}(h_{\eps})\right)\right)\,.
\end{align*}
Since $\hat{h}\in\argmin_{h\in\cH}\left(\hat{R}_{1}(h)-\frac{\left(m-1\right)N}{m\left(N-1\right)}\left\Vert \frac{1}{n}\sum_{i\in[n]}\left(\frac{1}{m}\sum_{x\in X_{i}}h(x)-\bar{y}_{i}\right)\right\Vert _{2}^{2}\right)=\argmin_{h\in\cH}\left(\hat{R}_{1}(h)-\frac{\left(m-1\right)N}{m\left(N-1\right)}\hat{r}(h)\right)$,
we have $\left(\hat{R}_{1}(\hat{h})-\frac{\left(m-1\right)N}{m\left(N-1\right)}\hat{r}(\hat{h})\right)-\left(\hat{R}_{1}(h_{\eps})-\frac{\left(m-1\right)N}{m\left(N-1\right)}\hat{r}(h_{\eps})\right)\le0$.
Therefore, we get 
\begin{align*}
R(\hat{h})-R(h_{\eps})\le & \frac{\left(m-1\right)N}{N-m}\left(\Delta_{3}(h_{\eps})-\Delta_{3}(\hat{h})\right)-\left(\Delta_{1}(\hat{h})-\Delta_{1}(h_{\eps})\right)\\
 & +\frac{m\left(N-1\right)}{N-m}\left(\left(R_{1}(\hat{h})-\hat{R}_{1}(\hat{h})\right)+\left(\hat{R}_{1}(h_{\eps})-R_{1}(h_{\eps})\right)\right)\\
\le & \frac{2\left(m-1\right)N}{N-m}\sup_{h\in\cH}\left\vert \Delta_{3}(h)\right\vert +2\sup_{h\in\cH}\left\vert \Delta_{1}(h)\right\vert +\frac{2m\left(N-1\right)}{N-m}\sup_{h\in\cH}\left\vert R_{1}(h)-\hat{R}_{1}(h)\right\vert \,.
\end{align*}
Note that the above inequality holds for any $h_{\eps}$. For any
$\eps>0$, we pick $h_{\eps}$ such that $R(h_{\eps})\le\inf_{h\in\cH}R(h)+\eps$.
We have
\[
\begin{split}
    & R(\hat{h})-\inf_{h\in\cH}R(h)\\
    \le{}&  R(\hat{h})-R(h_{\eps})+\eps\\
    \le{}& \frac{2\left(m-1\right)N}{N-m}\sup_{h\in\cH}\left\vert \Delta_{3}(h)\right\vert +2\sup_{h\in\cH}\left\vert \Delta_{1}(h)\right\vert +\frac{2m\left(N-1\right)}{N-m}\sup_{h\in\cH}\left\vert R_{1}(h)-\hat{R}_{1}(h)\right\vert +\eps\,.
\end{split}
\]
Thus we conclude
\[
R(\hat{h})-\inf_{h\in\cH}R(h)\le\frac{2\left(m-1\right)N}{N-m}\sup_{h\in\cH}\left\vert \Delta_{3}(h)\right\vert +2\sup_{h\in\cH}\left\vert \Delta_{1}(h)\right\vert +\frac{2m\left(N-1\right)}{N-m}\sup_{h\in\cH}\left\vert R_{1}(h)-\hat{R}_{1}(h)\right\vert \,.
\]

\end{proof}
\begin{lem}[Adapted from Corollary 4 in \citep{maurer2016vector}]
\label{lem:vector_Rade}Let $\cX$ be any set, $\left(x_{1},\dots,x_{n}\right)\in\cX^{n}$,
let $\cF$ be a class of functions $f:\cX\to\bR^{K}$ and let $h_{i}:\bR^{K}\to\bR$
have Lipschitz norm $L$. Then
\[
\bE\sup_{f\in\cF}\sum_{i\in[n]}\eps_{i}h_{i}(f(x_{i}))\le\sqrt{2}L\bE\sup_{f\in\cF}\sum_{i\in[n],k\in[K]}\eps_{i,k}f(x_{i})[k]\,,
\]
where $\{\eps_{i}\mid i\in[n]\}$ and $\{\eps_{i,k}\mid i\in[n],k\in[K]\}$
are independent Rademacher random variables and $f(x_{i})[k]$ is
the $k$-th component of $f(x_{i})$. 
\end{lem}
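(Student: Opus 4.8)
The plan is to obtain \cref{lem:vector_Rade} directly from the Rademacher vector-contraction inequality of \citet{maurer2016vector} (Corollary~4 there), which is the same statement with the codomain of the functions $f\in\cF$ taken to be an arbitrary separable Hilbert space in place of $\bR^{K}$. Since $\bR^{K}$ under the Euclidean norm is itself a finite-dimensional (hence separable) Hilbert space and each $h_{i}\colon\bR^{K}\to\bR$ is $L$-Lipschitz for that norm, Maurer's hypotheses hold verbatim; should one prefer his exact formulation, in which $h_{i}$ is defined on all of $\ell_{2}$, one first extends $h_{i}$ by the McShane formula $u\mapsto\inf_{v\in\bR^{K}}\bigl(h_{i}(v)+L\|u-v\|_{2}\bigr)$, which retains the Lipschitz constant $L$ and agrees with $h_{i}$ on $\bR^{K}$. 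So the first step is simply to record that the only ``adaptation'' is this specialization, and to invoke the cited result.

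If instead a self-contained argument is wanted, I would reproduce Maurer's proof in sketch form, as a telescoping (hybrid) induction over the $n$ coordinates. For $0\le j\le n$ put
\[
T_{j}\;\triangleq\;\bE\sup_{f\in\cF}\Bigl[\,\sum_{i\le j}\eps_{i}\,h_{i}(f(x_{i}))+\sqrt{2}\,L\sum_{i>j}\sum_{k\in[K]}\eps_{i,k}\,f(x_{i})[k]\,\Bigr],
\]
with the expectation over exactly the Rademacher variables that appear, so that $T_{n}$ and $T_{0}$ are respectively the left- and right-hand sides of the claim; it then suffices to prove $T_{j}\le T_{j-1}$ for each $j\in[n]$. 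Fixing $j$, I would condition on all randomness except $\eps_{j}$ and the vector $\eps_{j,\cdot}\triangleq(\eps_{j,1},\dots,\eps_{j,K})$, let $V_{f}$ denote the common leftover ($f$-dependent) expression, and compare two one-variable averages. Averaging over $\eps_{j}$ and then invoking $h_{j}(f(x_{j}))-h_{j}(f'(x_{j}))\le L\|f(x_{j})-f'(x_{j})\|_{2}$ gives
\[
\bE_{\eps_{j}}\sup_{f}\bigl[\eps_{j}h_{j}(f(x_{j}))+V_{f}\bigr]\;\le\;\tfrac12\sup_{f,f'}\bigl[\,L\|f(x_{j})-f'(x_{j})\|_{2}+V_{f}+V_{f'}\,\bigr],
\]
whereas the standard sign-flip symmetrization rewrites the matching $T_{j-1}$ average $\bE_{\eps_{j,\cdot}}\sup_{f}\bigl[\sqrt2\,L\langle\eps_{j,\cdot},f(x_{j})\rangle+V_{f}\bigr]$ as $\tfrac12\,\bE_{\eps_{j,\cdot}}\sup_{f,f'}\bigl[\sqrt2\,L\langle\eps_{j,\cdot},f(x_{j})-f'(x_{j})\rangle+V_{f}+V_{f'}\bigr]$.

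Since both carry the same factor $\tfrac12$, what remains is the inequality
\[
\sup_{f,f'}\bigl[L\|f(x_{j})-f'(x_{j})\|_{2}+V_{f}+V_{f'}\bigr]\;\le\;\bE_{\eps_{j,\cdot}}\sup_{f,f'}\bigl[\sqrt2\,L\langle\eps_{j,\cdot},f(x_{j})-f'(x_{j})\rangle+V_{f}+V_{f'}\bigr],
\]
and proving this while extracting exactly the constant $\sqrt2$ is the step I expect to be the main obstacle. The idea is: let $(f,f')$ (nearly) attain the left-hand side and set $v\triangleq f(x_{j})-f'(x_{j})$; for \emph{every} sign pattern $\eps_{j,\cdot}$ the larger of the two values produced on the right by the ordered pairs $(f,f')$ and $(f',f)$ equals $\sqrt2\,L\,|\langle\eps_{j,\cdot},v\rangle|+V_{f}+V_{f'}$, hence the right-hand side is at least $\sqrt2\,L\,\bE|\langle\eps_{j,\cdot},v\rangle|+V_{f}+V_{f'}\ge L\|v\|_{2}+V_{f}+V_{f'}$, where the final step is Khintchine's inequality with the sharp $L^{1}$ constant $1/\sqrt2$ (Szarek/Haagerup), i.e.\ $\bE_{\sigma}|\langle\sigma,v\rangle|\ge\|v\|_{2}/\sqrt2$ for a Rademacher vector $\sigma$. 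This is precisely where the $\sqrt2$ enters, and the constant is sharp (witnessed by $v=(1,1)$), so a cruder bound such as the dimension-dependent $\|v\|_{2}\le\sqrt{K}\,\|v\|_{\infty}$ would not suffice. Telescoping $T_{n}\le T_{n-1}\le\cdots\le T_{0}$ then yields the lemma; for uncountable $\cF$ the suprema are to be read over a countable dense subclass, so that no measurability issue arises.
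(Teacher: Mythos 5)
Your proposal is correct and matches what the paper does: the lemma is simply Maurer's vector-contraction inequality (Corollary 4) specialized to the Hilbert space $\bR^{K}$, and the paper gives no further proof beyond that citation. Your additional self-contained sketch faithfully reproduces Maurer's own telescoping argument, including the correct use of the sharp $L^{1}$ Khintchine bound $\bE\vert\langle\sigma,v\rangle\vert\ge\Vert v\Vert_{2}/\sqrt{2}$ as the source of the constant $\sqrt{2}$, so it is sound but not needed for the paper's purposes.
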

\begin{lem}
\label{lem:sup_R1-R1hat}Define $\cG_{3}\triangleq\left\{ \left(X,\bar{y}\right)\mapsto\left\Vert \frac{1}{m}\sum_{x\in X}h(x)-\bar{y}\right\Vert _{2}^{2}\mid h\in\cH\right\} $.
If $R_{1}(h)\triangleq\bE_{X,\bar{y}\sim\ag(\ep)}\left[\left\Vert \frac{1}{m}\sum_{x\in X}h(x)-\bar{y}\right\Vert _{2}^{2}\right]$
and $\hat{R}_{1}(h)\triangleq\frac{1}{n}\sum_{i\in[n]}\left\Vert \frac{1}{m}\sum_{x\in X_{i}}h(x)-\bar{y}_{i}\right\Vert _{2}^{2}$,
with probability at least $1-\delta$, we have 
\[
\sup_{h\in\cH}\left\vert R_{1}(h)-\hat{R}_{1}(h)\right\vert \le4\sqrt{2K}\fR_{n,\ep\mid_{x}}(\cH)+\sqrt{\frac{\log\left(2/\delta\right)}{2n}}\,.
\]
\end{lem}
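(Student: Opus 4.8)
The plan is to run the textbook three-step uniform-convergence argument, carried out conditionally on the training set $\Str$ (so $\ep$ is frozen and all randomness lives in the $n$ i.i.d.\ bags $(X_i,\bar y_i)\iid\ag(\ep)$ and in auxiliary Rademacher signs). The starting point is that $\hat R_1$ is unbiased for $R_1$: $\bE[\hat R_1(h)]=R_1(h)$, because $\hat R_1(h)$ is the empirical average over i.i.d.\ bags of $g_h(X,\bar y)\triangleq\|\frac1m\sum_{x\in X}h(x)-\bar y\|_2^2$. Hence it suffices to (i) concentrate $\Phi(S)\triangleq\sup_{h\in\cH}(R_1(h)-\hat R_1(h))$ around $\bE[\Phi(S)]$, (ii) symmetrize to bound $\bE[\Phi(S)]$ by twice the empirical Rademacher complexity of $\cG_3$, and (iii) strip off the squared norm and the bag-average to reduce that complexity to $\fR_{n,\ep\mid_x}(\cH)$.

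For step (i): since $h(x)\in\cY=\{0,1\}^K$ for every $h\in\cH$, the bag-average $\frac1m\sum_{x\in X}h(x)$ lies in $[0,1]^K$ while $\bar y\in\{0,1\}^K$, so $0\le g_h\le K$ pointwise; hence replacing one bag perturbs $\hat R_1(h)$, and therefore $\Phi(S)$, by at most $K/n$. McDiarmid's inequality then controls $\Phi(S)$ by $\bE[\Phi(S)]$ up to an additive term of order $K\sqrt{\log(1/\delta)/n}$; applying this to $\hat R_1-R_1$ as well and union-bounding the two halves replaces $\delta$ by $\delta/2$ and produces the $\sqrt{\log(2/\delta)/(2n)}$ tail in the statement.

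For steps (ii)--(iii): standard symmetrization gives $\bE[\Phi(S)]\le 2\,\bE\sup_{h\in\cH}\frac1n\sum_{i\in[n]}\sigma_i\,g_h(X_i,\bar y_i)$. Write $g_h(X_i,\bar y_i)=\psi_i(f_h(X_i))$ with $f_h(X)\triangleq\frac1m\sum_{x\in X}h(x)\in[0,1]^K$ and $\psi_i(u)\triangleq\|u-\bar y_i\|_2^2$; on $[0,1]^K$ the map $\psi_i$ is $2\sqrt K$-Lipschitz since $\nabla\psi_i(u)=2(u-\bar y_i)$ has $\ell_2$-norm at most $2\sqrt K$ there, and one may replace $\psi_i$ by a globally $2\sqrt K$-Lipschitz extension (e.g.\ pre-composing with projection onto the cube) without changing its values at the points actually used. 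Maurer's vector contraction (\cref{lem:vector_Rade}) with $L=2\sqrt K$ then bounds the Rademacher complexity of $\cG_3$ by $2\sqrt{2K}$ times the flattened Rademacher complexity of $\{f_h:h\in\cH\}$. Finally, because $f_h(X_i)[k]=\frac1m\sum_{j\in[m]}h(x_{i,j})[k]$ is an average, convexity of the supremum yields $\sup_h\frac1n\sum_{i,k}\sigma_{i,k}\frac1m\sum_j h(x_{i,j})[k]\le\frac1m\sum_{j\in[m]}\sup_h\frac1n\sum_{i,k}\sigma_{i,k}h(x_{i,j})[k]$, and since within a bag the sampling is without replacement, for each fixed $j$ the examples $\{x_{i,j}\}_{i\in[n]}$ are i.i.d.\ with marginal law $\ep\mid_x$; taking expectations, each of the $m$ inner terms equals the (flattened) Rademacher complexity $\fR_{n,\ep\mid_x}(\cH)$ appearing in the statement, so the average is still $\fR_{n,\ep\mid_x}(\cH)$. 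Combining the symmetrization factor $2$ with the $2\sqrt{2K}$ gives the $4\sqrt{2K}\,\fR_{n,\ep\mid_x}(\cH)$ term.

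The step I expect to be the main obstacle is (iii): collapsing the bag structure — an average of $m$ without-replacement-correlated examples tied to a single Rademacher sign $\sigma_i$ — down to single-example Rademacher complexity, while keeping the $\sqrt K$ bookkeeping consistent through the vector contraction and checking that without-replacement sampling inside a bag still leaves the per-coordinate samples i.i.d.\ across bags so the marginal can be identified with $\ep\mid_x$. By contrast, the concentration step is routine once the $O(K)$ range of the squared loss is noted, and the symmetrization step is entirely standard.
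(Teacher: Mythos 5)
Your proposal is correct and follows essentially the same route as the paper's proof: a symmetrization/McDiarmid uniform-convergence bound for the class $\cG_3$, Maurer's vector contraction with Lipschitz constant $2\sqrt K$, and then exchanging the supremum with the within-bag average while using that, for each fixed slot $j$, the examples $\{x_{i,j}\}_{i\in[n]}$ are i.i.d.\ with law $\ep\mid_x$ across bags. The only remark worth making is that you correctly observe the squared loss has range $[0,K]$ yet still quote the $K$-free tail $\sqrt{\log(2/\delta)/(2n)}$ — but the paper's own invocation of the standard bound has exactly the same unacknowledged $K$ dependence, so your argument is faithful to (and slightly more explicit than) the original.
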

\begin{proof}
 Recall $\fR_{n,\ag(\ep)}(\cG_{3})=\bE_{\{(X_{i},\bar{y}_{i})\}\sim\ag(\ep)}\bE_{\{\sigma_{i}\}}\sup_{g\in\cG_{3}}\frac{1}{n}\sum_{i\in[n]}\sigma_{i}g\left(X_{i},\bar{y}_{i}\right)$.
By \citep{bartlett2002rademacher}, we have for any $\delta>0$, with
probability at least $1-\delta$
\begin{align}
 & \sup_{h\in\cH}\left\vert R_{1}(h)-\hat{R}_{1}(h)\right\vert \nonumber \\
= & \left|\bE_{X,\bar{y}\sim\ag(\ep)}\left[\left\Vert \frac{1}{m}\sum_{x\in X}h(x)-\bar{y}\right\Vert _{2}^{2}\right]-\frac{1}{n}\sum_{i\in[n]}\left\Vert \frac{1}{m}\sum_{x\in X_{i}}h(x)-\bar{y}_{i}\right\Vert _{2}^{2}\right|\nonumber \\
\le & 2\fR_{n,\ag(\ep)}(\cG_{3})+\sqrt{\frac{\log\left(2/\delta\right)}{2n}}\,.\label{eq:sup_R1h_hatR1h}
\end{align}
Since the function $\bR^{K}\ni y\mapsto\left\Vert y-\bar{y}\right\Vert _{2}^{2}$
is $2\sqrt{K}$-Lipschitz for $y,\bar{y}\in[0,1]$, by \cref{lem:vector_Rade},
we have
\begin{align}
& \fR_{n,\ag(\ep)}(\cG_{3})\\
 \le{}& 2\sqrt{2K}\bE_{\{X_{i}\}\sim\ag(\ep)\mid_{X}}\bE_{\{\sigma_{i,k}\}\iid\unif(\{\pm1\})}\sup_{h\in\cH}\frac{1}{n}\sum_{i\in[n],k\in[K]}\sigma_{i,k}\frac{1}{m}\sum_{j\in[m]}h(x_{i,j})[k]\nonumber \\
 \le{}& 2\sqrt{2K}\cdot\frac{1}{m}\sum_{j\in[m]}\bE_{\{X_{i}\}\sim\ag(\ep)\mid_{X}}\bE_{\{\sigma_{i,k}\}\iid\unif(\{\pm1\})}\sup_{h\in\cH}\frac{1}{n}\sum_{i\in[n],k\in[K]}\sigma_{i,k}h(x_{i,j})[k]\nonumber \\
 ={}& 2\sqrt{2K}\cdot\frac{1}{m}\sum_{j\in[m]}\bE_{\{x_{i}\}\sim\ep\mid_{x}}\bE_{\{\sigma_{i,k}\}\iid\unif(\{\pm1\})}\sup_{h\in\cH}\frac{1}{n}\sum_{i\in[n],k\in[K]}\sigma_{i,k}h(x_{i})[k]\nonumber \\
 ={}& 2\sqrt{2K}\bE_{\{x_{i}\}\sim\ep\mid_{x}}\bE_{\{\sigma_{i,k}\}\iid\unif(\{\pm1\})}\sup_{h\in\cH}\frac{1}{n}\sum_{i\in[n],k\in[K]}\sigma_{i,k}h(x_{i})[k]\nonumber \\
 ={}& 2\sqrt{2K}\fR_{n,\ep\mid_{x}}(\cH)\,.\label{eq:R_n_PStrain}
\end{align}
Combining \eqref{eq:sup_R1h_hatR1h} and \eqref{eq:R_n_PStrain} yields
the desired result.
\end{proof}
\begin{lem}
\label{lem:sup_Delta_1}If $\Delta_{1}(h)\triangleq\bE_{(x,y)\sim\ep}\left\Vert h(x)-y\right\Vert _{2}^{2}-\bE_{(x,y)\sim\cD}\left[\left\Vert h(x)-y\right\Vert _{2}^{2}\right]$,
with probability at least $1-\delta$, we have
\[
\sup_{h\in\cH}\left\vert \Delta_{1}(h)\right\vert \le4\sqrt{2K}\fR_{N,\bP\mid_{x}}^{+}(\cH)+\sqrt{\frac{\log2/\delta}{2N}}\,.
\]
\end{lem}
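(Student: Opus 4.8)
The quantity $\Delta_1(h)$ is precisely the gap between the empirical squared‑loss risk under $\ep$ and the population squared‑loss risk under $\cD$; since the $N$ points defining $\ep$ are i.i.d.\ from $\bP$ (and $\cD=\bP$), this is a textbook uniform‑convergence quantity for the loss class $\cG_{1}\triangleq\{(x,y)\mapsto\|h(x)-y\|_2^2\mid h\in\cH\}$, and the raw examples $(x^{(i)},y^{(i)})$, although hidden from the learner, are legitimate random variables to reason about. So the plan is to mirror the proof of \cref{lem:sup_R1-R1hat}: first obtain a Rademacher‑complexity uniform‑convergence bound for $\cG_{1}$, then reduce $\fR(\cG_{1})$ to the flattened Rademacher complexity $\fR^{+}$ of $\cH$ via the vector contraction inequality \cref{lem:vector_Rade}.

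First I would handle the concentration: since $h(x),y\in\{0,1\}^{K}\subseteq[0,1]^{K}$ the loss $\|h(x)-y\|_2^2$ is bounded, so the map $(x^{(i)},y^{(i)})_{i\in[N]}\mapsto\sup_{h\in\cH}|\Delta_1(h)|$ satisfies a bounded‑differences condition; combining McDiarmid's inequality with the standard ghost‑sample symmetrization step (the same Rademacher generalization bound used in \cref{lem:sup_R1-R1hat}, cf.\ \citep{bartlett2002rademacher}) gives, with probability at least $1-\delta$,
\[
\sup_{h\in\cH}|\Delta_1(h)|\le 2\,\fR_{N,\bP}(\cG_{1})+\sqrt{\frac{\log(2/\delta)}{2N}},
\]
where $\fR_{N,\bP}(\cG_{1})=\bE\sup_{h\in\cH}\frac1N\sum_{i\in[N]}\sigma_i\|h(x^{(i)})-y^{(i)}\|_2^2$ with $\{\sigma_i\}$ Rademacher.

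Then I would carry out the reduction to $\fR^{+}$, which is the heart of the argument. For fixed $y\in[0,1]^{K}$ the map $\bR^{K}\ni v\mapsto\|v-y\|_2^2$ has gradient $2(v-y)$ and is therefore $2\sqrt{K}$‑Lipschitz on $[0,1]^{K}$ in the Euclidean norm. Conditioning on $(x^{(i)},y^{(i)})_{i\in[N]}$ and applying \cref{lem:vector_Rade} with $\cF=\cH$, $h_i(v)=\|v-y^{(i)}\|_2^2$ and $L=2\sqrt{K}$ yields
\[
\bE_{\sigma}\sup_{h\in\cH}\sum_{i\in[N]}\sigma_i\|h(x^{(i)})-y^{(i)}\|_2^2\le 2\sqrt{2K}\;\bE_{\sigma}\sup_{h\in\cH}\sum_{i\in[N],k\in[K]}\sigma_{i,k}\,h(x^{(i)})[k].
\]
Dividing by $N$, taking the outer expectation over the samples, and noting that the right‑hand side now depends only on $x^{(i)}\iid\bP\mid_{x}$, we get $\fR_{N,\bP}(\cG_{1})\le 2\sqrt{2K}\,\fR_{N,\bP\mid_{x}}^{+}(\cH)$. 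Substituting into the concentration bound gives
\[
\sup_{h\in\cH}|\Delta_1(h)|\le 4\sqrt{2K}\,\fR_{N,\bP\mid_{x}}^{+}(\cH)+\sqrt{\frac{\log(2/\delta)}{2N}},
\]
which is the claim.

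The only non‑mechanical point is the contraction step: the squared loss couples all $K$ output coordinates of $h(x)$, so a naive coordinatewise Talagrand contraction would introduce a spurious factor of $K$. The vector‑valued contraction of \cref{lem:vector_Rade} (Maurer's inequality) is exactly what keeps the dimension dependence at $\sqrt{K}$ and produces the flattened complexity $\fR^{+}$; one must also be slightly careful that after contraction and taking expectations the Rademacher complexity is over the true marginal $\bP\mid_{x}$, matching the statement. The remainder is the standard McDiarmid‑plus‑symmetrization bookkeeping already carried out in \cref{lem:sup_R1-R1hat}.
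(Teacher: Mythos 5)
Your proposal is correct and follows essentially the same route as the paper's proof: the same uniform-convergence bound over the class $\cG_{1}=\{(x,y)\mapsto\|h(x)-y\|_2^2\mid h\in\cH\}$ via \citep{bartlett2002rademacher}, followed by the vector contraction inequality of \cref{lem:vector_Rade} with Lipschitz constant $2\sqrt{K}$ to pass to the flattened complexity $\fR_{N,\bP\mid_{x}}^{+}(\cH)$. Your added remark that a coordinatewise contraction would lose an extra factor of $K$ is a fair observation but not needed for correctness.
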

\begin{proof}
Define $\cG_{1}\triangleq\{\left(x,y\right)\mapsto\left\Vert h(x)-y\right\Vert _{2}^{2}\mid h\in\cH\}$
and recall 
\[
\fR_{N,\bP}(\cG_{1})\triangleq\bE_{\{(x_{i},y_{i})\mid i\in[N]\}\sim\bP}\bE_{\{\sigma_{i}\}\iid\unif(\{\pm1\})}\sup_{g\in\cG_{1}}\frac{1}{N}\sum_{i\in[N]}\sigma_{i}g(x_{i},y_{i})\,.
\]
By \citep{bartlett2002rademacher}, we have for any $\delta>0$, with
probability at least $1-\delta$
\begin{equation}
\sup_{h\in\cH}\left\vert \Delta_{1}(h)\right\vert \le2\fR_{N,\bP}(\cG_{1})+\sqrt{\frac{\log2/\delta}{2N}}\,.\label{eq:sup_Delta_1}
\end{equation}
Since the function $\bR^{K}\ni y\mapsto\left\Vert y-y'\right\Vert _{2}^{2}$
is $2\sqrt{K}$-Lipschitz for $y,y'\in[0,1]$, by \cref{lem:vector_Rade},
we have
\begin{equation}
\begin{split}
\fR_{N,\bP}(\cG_{1})\le{}& 2\sqrt{2K}\bE_{\{(x_{i},y_{i})\mid i\in[N]\}\sim\bP}\bE_{\{\sigma_{i,k}\}\iid\unif(\{\pm1\})}\sup_{h\in\cH}\frac{1}{N}\sum_{i\in[N],k\in[K]}\sigma_{i,k}h(x_{i})[k]\\
={}& 2\sqrt{2K}\fR_{N,\bP\mid_{x}}^{+}(\cH)\,.    
\end{split}
\label{eq:R_ND}
\end{equation}
Combining \eqref{eq:sup_Delta_1} and \eqref{eq:R_ND} yields the desired
result. 
\end{proof}
\begin{lem}
\label{lem:sup_Delta_3}Define $\cH\mid_{k}\triangleq\left\{ x\mapsto h(x)[k]\mid h\in\cH\right\} $.
If $\Delta_{3}(h)\triangleq\left\Vert \bE_{(x,y)\sim\ep}\left(h(x)-y\right)\right\Vert _{2}^{2}-\left\Vert \frac{1}{n}\sum_{i\in[n]}\left(\frac{1}{m}\sum_{x\in X_{i}}h(x)-\bar{y}_{i}\right)\right\Vert _{2}^{2}$,
with probability at least $1-2\delta$, we have
\[
\sup_{h\in\cH}\left|\Delta_{3}(h)\right|\le4\sum_{k\in[K]}\fR_{n,\ep}(\cH\mid_{k})+4K\sqrt{\frac{\log\left(2mK/\delta\right)}{2n}}\,.
\]
\end{lem}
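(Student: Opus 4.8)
The plan is to linearise the difference of squared norms and reduce to a coordinatewise uniform-convergence statement. Write $u(h)\triangleq\bE_{(x,y)\sim\ep}\bigl(h(x)-y\bigr)\in\bR^K$ for the empirical-measure mean of the residual and $\hat u(h)\triangleq\frac1n\sum_{i\in[n]}\bigl(\frac1m\sum_{x\in X_i}h(x)-\bar y_i\bigr)$ for its bag-based estimate, so that $\Delta_3(h)=\|u(h)\|_2^2-\|\hat u(h)\|_2^2=\sum_{k\in[K]}\bigl(u(h)[k]-\hat u(h)[k]\bigr)\bigl(u(h)[k]+\hat u(h)[k]\bigr)$. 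Since $h(x)[k],\,y[k],\,\bar y_i[k]\in\{0,1\}$, both $u(h)[k]$ and $\hat u(h)[k]$ lie in $[-1,1]$, hence $|\Delta_3(h)|\le 2\sum_{k\in[K]}|u(h)[k]-\hat u(h)[k]|$, and it suffices to show that, for each fixed $k$ and with high probability, $\sup_{h\in\cH}|u(h)[k]-\hat u(h)[k]|\le 2\fR_{n,\ep}(\cH\mid_k)+2\sqrt{\log(2/\delta')/(2n)}$, and then take a union bound over $k$. The structural fact making this possible is that $\hat u(h)$ is \emph{unbiased} for $u(h)$: in the sampling process the $j$-th drawn element of any bag is marginally uniform on $\Str$, and $\bE[\bar y_i\mid y_{i,1},\dots,y_{i,m}]=\frac1m\sum_{j\in[m]}y_{i,j}$, so $\bE\hat u(h)=\bE_{\ep\mid_x}h(x)-\bE_{\ep\mid_y}y=u(h)$.

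For the per-coordinate bound I would decompose $\hat u(h)[k]$ into an aggregation-noise term and a sampling term. Set $\xi_i[k]\triangleq\bar y_i[k]-\frac1m\sum_{j\in[m]}y_{i,j}[k]$; across $i$ these are i.i.d., bounded in $[-1,1]$, and zero-mean (conditionally on the sampled labels, hence unconditionally), so Hoeffding gives $\bigl|\frac1n\sum_{i}\xi_i[k]\bigr|\le\sqrt{\log(2/\delta')/(2n)}$ with probability $1-\delta'$. The leftover term is $\frac1m\sum_{j\in[m]}\Bigl[\frac1n\sum_{i\in[n]}\bigl(h(x_{i,j})[k]-y_{i,j}[k]\bigr)\Bigr]$, and this is where the without-replacement sampling must be handled: for a \emph{fixed} within-bag slot $j$, the pairs $\{(x_{i,j},y_{i,j})\}_{i\in[n]}$ are genuinely i.i.d.\ with law $\ep$ (by exchangeability within a bag and independence across bags), so the standard Rademacher generalization bound \citep{bartlett2002rademacher} applied to the $[-1,1]$-valued class $\gG_{j,k}\triangleq\{(x,y)\mapsto h(x)[k]-y[k]\mid h\in\cH\}$, whose mean under $\ep$ is $u(h)[k]$, yields with probability $1-\delta'$ that $\sup_{h}\bigl|u(h)[k]-\frac1n\sum_i(h(x_{i,j})[k]-y_{i,j}[k])\bigr|\le 2\fR_{n,\ep}(\gG_{j,k})+\sqrt{\log(2/\delta')/(2n)}$; moreover $\fR_{n,\ep}(\gG_{j,k})=\fR_{n,\ep}(\cH\mid_k)$ because the additive $-y[k]$ summand is independent of the Rademacher signs and averages to $0$ inside the supremum. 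Taking a union bound over $j\in[m]$ and averaging the resulting $m$ inequalities controls the leftover term by $2\fR_{n,\ep}(\cH\mid_k)+\sqrt{\log(2/\delta')/(2n)}$ on an event of probability $1-m\delta'$; adding the $\xi$-bound gives $\sup_h|u(h)[k]-\hat u(h)[k]|\le 2\fR_{n,\ep}(\cH\mid_k)+2\sqrt{\log(2/\delta')/(2n)}$ with probability $1-(m+1)\delta'$.

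Finally I would assemble the pieces. A union bound over $k\in[K]$ makes all coordinate bounds hold simultaneously with probability $1-K(m+1)\delta'$, whence $\sup_{h}|\Delta_3(h)|\le 2\sum_{k\in[K]}\bigl(2\fR_{n,\ep}(\cH\mid_k)+2\sqrt{\log(2/\delta')/(2n)}\bigr)=4\sum_{k\in[K]}\fR_{n,\ep}(\cH\mid_k)+4K\sqrt{\log(2/\delta')/(2n)}$. Choosing $\delta'=2\delta/(K(m+1))$ makes the failure probability $2\delta$, and since $m\ge 1$ we have $2/\delta'=K(m+1)/\delta\le 2mK/\delta$, which upgrades $\log(2/\delta')$ to $\log(2mK/\delta)$ and gives exactly the claimed inequality. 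The main obstacle is the one flagged above: because the $m$ examples in a bag are sampled \emph{without replacement}, one cannot symmetrize over the $nm$ example slots jointly; freezing a slot $j$ restores independence but costs a $\log m$ via the union bound over slots, and this is precisely the source of the $m$ inside the logarithm. A secondary point requiring care is checking that the Bernoulli aggregation noise $\xi_i[k]$ is independent across bags so Hoeffding applies, and that the label term contributes nothing to the Rademacher complexity.
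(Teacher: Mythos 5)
Your proposal is correct and follows essentially the same route as the paper's proof: factor $\Delta_3$ as a difference of squares to reduce to the $\ell_1$ distance between the two residual means, split off the Bernoulli aggregation noise and control it by Hoeffding, and handle the without-replacement sampling by freezing each within-bag slot $j$ (so the $n$ samples in that slot are i.i.d.\ from $\ep$) and union-bounding over the $mK$ slot--coordinate pairs, which is exactly where the $m$ inside the logarithm comes from. The only cosmetic difference is that you remove the $-y[k]$ term from the Rademacher complexity by direct cancellation of the sign-weighted label sum, whereas the paper invokes Talagrand's contraction lemma at the end; both yield the same bound.
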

\begin{proof}
We introduce three short-hand notations 
\begin{align*}
\rho_{1}(h) & \triangleq\bE_{(x,y)\sim\ep}\left(h(x)-y\right)\in\bR^{K}\,,\\
\hat{\rho}_{1,j}(h) & \triangleq\frac{1}{n}\sum_{i\in[n]}\left(h(x_{i,j})-y_{i,j}\right)\in\bR^{K}\,,\\
\rho_{2}(h) & \triangleq\frac{1}{n}\sum_{i\in[n]}\left(\frac{1}{m}\sum_{x\in X_{i}}h(x)-\bar{y}_{i}\right)\in\bR^{K}\,.
\end{align*}
 With these notations at hand, we have $\Delta_{3}(h)\triangleq\left\Vert \rho_{1}(h)\right\Vert _{2}^{2}-\left\Vert \rho_{2}(h)\right\Vert _{2}^{2}$.
Note that $\rho_{1}(h),\rho_{2}(h)\in[-1,1]^{K}$. We have
\begin{equation}
\left|\Delta_{3}(h)\right|\le\left|\sum_{k\in[K]}\left(\rho_{1}(h)[k]^{2}-\rho_{2}(h)[k]^{2}\right)\right|\le2\sum_{k\in[K]}\left|\rho_{1}(h)[k]-\rho_{2}(h)[k]\right|=2\left\Vert \rho_{1}(h)-\rho_{2}(h)\right\Vert _{1}\,.\label{eq:Delta_3-1}
\end{equation}
Define $\cG_{2}\mid_{k}\triangleq\{\left(x,y\right)\mapsto(h(x)-y)[k]\mid h\in\cH\}$.
By \citep{bartlett2002rademacher}, we have for any $\delta>0$, with
probability at least $1-\delta/(mK)$
\end{proof}
\begin{equation}
\sup_{h\in\cH}\left\vert \rho_{1}(h)[k]-\hat{\rho}_{1,j}(h)[k]\right\vert \le2\fR_{n,\ep}(\cG_{2}\mid_{k})+\sqrt{\frac{\log\left(2mK/\delta\right)}{2n}}\,.\label{eq:sup_rho1}
\end{equation}

Since 
\begin{align*}
\rho_{2}(h) & =\frac{1}{n}\sum_{i\in[n]}\left(\frac{1}{m}\sum_{j\in[m]}h(x_{i,j})-\bar{y}_{i}\right)\\
 & =\frac{1}{n}\sum_{i\in[n]}\left(\frac{1}{m}\sum_{j\in[m]}\left(h(x_{i,j})-y_{i,j}\right)+\frac{1}{m}\sum_{j\in[m]}y_{i,j}-\bar{y}_{i}\right)\\
 & =\frac{1}{m}\sum_{j\in[m]}\hat{\rho}_{1,j}(h)+\frac{1}{n}\sum_{i\in[n]}\left(\frac{1}{m}\sum_{j\in[m]}y_{i,j}-\bar{y}_{i}\right)\,,
\end{align*}
we have
\begin{align*}
 & \left\Vert \rho_{1}(h)-\rho_{2}(h)\right\Vert _{1}\\
= & \left\Vert \rho_{1}(h)-\left(\frac{1}{m}\sum_{j\in[m]}\hat{\rho}_{1,j}(h)+\frac{1}{n}\sum_{i\in[n]}\left(\frac{1}{m}\sum_{j\in[m]}y_{i,j}-\bar{y}_{i}\right)\right)\right\Vert _{1}\\
= & \left\Vert \rho_{1}(h)-\frac{1}{m}\sum_{j\in[m]}\hat{\rho}_{1,j}(h)\right\Vert _{1}+\left\Vert \frac{1}{n}\sum_{i\in[n]}\left(\frac{1}{m}\sum_{j\in[m]}y_{i,j}-\bar{y}_{i}\right)\right\Vert _{1}\\
\le & \frac{1}{m}\sum_{j\in[m]}\left\Vert \rho_{1}(h)-\hat{\rho}_{1,j}(h)\right\Vert _{1}+\left\Vert \frac{1}{n}\sum_{i\in[n]}\left(\frac{1}{m}\sum_{j\in[m]}y_{i,j}-\bar{y}_{i}\right)\right\Vert _{1}\,.
\end{align*}
By \eqref{eq:sup_rho1}, with probability $1-\delta$, we have
\[
\begin{split}
    \frac{1}{m}\sum_{j\in[m]}\left\Vert \rho_{1}(h)-\hat{\rho}_{1,j}(h)\right\Vert _{1}={}& \frac{1}{m}\sum_{j\in[m],k\in[K]}\left|\rho_{1}(h)[k]-\hat{\rho}_{1,j}(h)[k]\right|\\
    \le{}& 2\sum_{k\in[K]}\fR_{n,\ep}(\cG_{2}\mid_{k})+K\sqrt{\frac{\log\left(2mK/\delta\right)}{2n}}\,.
\end{split}
\]

Recall $\bar{y}_{i}\mid y_{i,1},y_{i,2},\dots,y_{i,m}\sim\ber\left(\frac{1}{m}\sum_{j\in[m]}y_{i,j}\right)$,
by Hoeffding's inequality, we get 
\[
\Pr\left(\left\vert \frac{1}{n}\sum_{i\in[n]}\left(\frac{1}{m}\sum_{j\in[m]}y_{i,j}-\bar{y}_{i}\right)[k]\right\vert \ge\sqrt{\frac{\log(2K/\delta)}{2n}}\right)\le\delta/K\,.
\]
With probability at least $1-\delta$, we have $\left\Vert \frac{1}{n}\sum_{i\in[n]}\left(\frac{1}{m}\sum_{j\in[m]}y_{i,j}-\bar{y}_{i}\right)\right\Vert _{1}\le K\sqrt{\frac{\log(2K/\delta)}{2n}}$.
Therefore, with probability at least $1-2\delta$, we have
\[
\left\Vert \rho_{1}(h)-\rho_{2}(h)\right\Vert _{1}\le2\sum_{k\in[K]}\fR_{n,\ep}(\cG_{2}\mid_{k})+2K\sqrt{\frac{\log\left(2mK/\delta\right)}{2n}}\,,
\]
which implies 
\begin{align*}
\sup_{h\in\cH}\left|\Delta_{3}(h)\right| & \le4\sum_{k\in[K]}\fR_{n,\ep}(\cG_{2}\mid_{k})+4K\sqrt{\frac{\log\left(2mK/\delta\right)}{2n}}\\
 & \le4\sum_{k\in[K]}\fR_{n,\ep\mid_{x}}(\cH\mid_{k})+4K\sqrt{\frac{\log\left(2mK/\delta\right)}{2n}}\,.
\end{align*}
The second inequality above is because of Talagrand's contraction
lemma (see, e.g., \citep[Lemma 5.7]{mohri2018foundations}).
\begin{proof}[Proof of \cref{thm:rademacher}]
Using \cref{lem:R-infR}, \cref{lem:sup_R1-R1hat}, \cref{lem:sup_Delta_1}
and \cref{lem:sup_Delta_3}, with probability at least $1-4\delta$,
we have
\begin{align*}
 & R(\hat{h})-\inf_{h\in\cH}R(h)\\
\le & \frac{2\left(m-1\right)N}{N-m}\sup_{h\in\cH}\left\vert \Delta_{3}(h)\right\vert +2\sup_{h\in\cH}\left\vert \Delta_{1}(h)\right\vert +\frac{2m\left(N-1\right)}{N-m}\sup_{h\in\cH}\left\vert R_{1}(h)-\hat{R}_{1}(h)\right\vert \\
\le & \frac{2\left(m-1\right)N}{N-m}\left(4\sum_{k\in[K]}\fR_{n,\ep\mid_{x}}(\cH\mid_{k})+4K\sqrt{\frac{\log\left(2mK/\delta\right)}{2n}}\right)\\
 & +2\left(4\sqrt{2K}\fR_{N,\cD_{x}}^{+}(\cH)+\sqrt{\frac{\log2/\delta}{2N}}\right)\\
 & +\frac{2m\left(N-1\right)}{N-m}\left(4\sqrt{2K}\fR_{n,\ep\mid_{x}}(\cH)+\sqrt{\frac{\log\frac{2}{\delta}}{2n}}\right)\,.
\end{align*}
\end{proof}

\end{document}